\documentclass{article}

\usepackage{iclr2027_conference,times}

\usepackage{iftex}
\ifPDFTeX
  \usepackage{times}
\else
  \usepackage{fontspec}
\fi

\usepackage{amsmath,amsfonts,bm}

\def\eqref#1{equation~\ref{#1}}

\def\1{\bm{1}}

\DeclareMathAlphabet{\mathsfit}{\encodingdefault}{\sfdefault}{m}{sl}
\SetMathAlphabet{\mathsfit}{bold}{\encodingdefault}{\sfdefault}{bx}{n}

\newcommand{\R}{\mathbb{R}}

\usepackage{amssymb,amsthm,mathtools}
\usepackage{xcolor}                 %
\usepackage{enumitem}
\usepackage{microtype}
\usepackage{booktabs}
\usepackage{colortbl}               %
\usepackage{tabularx}
\usepackage{multirow}
\usepackage{threeparttable}         %
\usepackage{pifont}                 %
\usepackage{graphicx}
\usepackage{subcaption}
\usepackage{float}                  %
\usepackage{algorithm}
\usepackage{algpseudocode}
\usepackage[hyphens]{url}
\definecolor{PaletteBlue}{HTML}{1F77B4}
\definecolor{PaletteTeal}{HTML}{17BECF}
\colorlet{CitationBlue}{PaletteBlue!55!black}
\colorlet{ReferenceTeal}{PaletteTeal!55!black}
\usepackage[colorlinks=true,linkcolor=ReferenceTeal,citecolor=CitationBlue,urlcolor=CitationBlue]{hyperref}
\newcommand{\dblstitchDoiLink}[1]{doi:~\href{https://doi.org/#1}{\nolinkurl{#1}}}
\newcommand{\dblstitchArxivLink}[1]{arXiv:\href{https://arxiv.org/abs/#1}{#1}}
\ExplSyntaxOn
\NewDocumentCommand{\doi}{m}{
  \str_set:Nn \l_tmpa_str {#1}
  \str_if_eq:eeTF { \str_range:Nnn \l_tmpa_str {1}{15} } { 10.48550/arXiv. }
    { \str_set:Nx \l_tmpb_str { \str_range:Nnn \l_tmpa_str {16}{-1} }
      \exp_args:NV \dblstitchArxivLink \l_tmpb_str }
    { \dblstitchDoiLink {#1} }
}
\ExplSyntaxOff
\usepackage[capitalize,noabbrev,nameinlink]{cleveref}

\makeatletter
\let\dblstitch@paragraph\paragraph
\renewcommand{\paragraph}[1]{\dblstitch@paragraph{#1.}}
\AtBeginDocument{%
  \let\dblstitch@addtocontents\addtocontents
  \renewcommand{\addtocontents}[2]{}%
}
\newif\ifdblstitch@firsttocsection
\newcommand{\appendixcontents}{%
  \dblstitch@firsttocsectiontrue
  \let\addtocontents\dblstitch@addtocontents
  \setcounter{tocdepth}{2}%
  \begingroup
  \renewcommand{\contentsname}{Appendix contents}
  \hypersetup{linkcolor=black}
  \setlength{\parskip}{0pt}
  \linespread{0.95}\selectfont
  \renewcommand*{\l@section}[2]{%
    \ifnum\c@tocdepth>\z@
      \ifdblstitch@firsttocsection\global\dblstitch@firsttocsectionfalse\else
        \addpenalty\@secpenalty\addvspace{0.5em \@plus\p@}%
      \fi
      \setlength\@tempdima{1.5em}%
      \begingroup
        \parindent\z@ \rightskip\@pnumwidth \parfillskip-\@pnumwidth
        \leavevmode\bfseries
        \advance\leftskip\@tempdima \hskip -\leftskip
        ##1\nobreak
        \leaders\hbox{$\m@th\mkern\@dotsep mu\hbox{.}\mkern\@dotsep mu$}\hfill
        \nobreak\hb@xt@\@pnumwidth{\hss ##2}\par
      \endgroup
    \fi}
  \tableofcontents
  \endgroup
  \medskip
}
\makeatother

\newcommand{\tablesize}{\fontsize{8}{9.6}\selectfont}
\newcommand{\appendixtablesize}{\fontsize{7}{8.4}\selectfont}
\newcommand{\cmark}{\ding{51}}
\newcommand{\xmark}{\ding{55}}
\newcommand{\tbd}{\textemdash}      %
\newcommand{\tbdnum}{\makebox[\widthof{$0.000$}]{\tbd}\phantom{${}\pm 0.000$}}%
\newcommand{\grouprule}{\arrayrulecolor{black!30}\specialrule{0.45pt}{1.5pt}{1.5pt}\arrayrulecolor{black}}
\newcommand{\grouplabel}[1]{\multirow{2}{*}{\begin{tabular}[c]{@{}l@{}}#1\end{tabular}}}
\newcommand{\inlinetablenotes}{\renewcommand\item[1][]{\ifhmode\unskip\ \fi\tnote{##1}\,\ignorespaces}}
\DeclareCaptionFont{fighead}{\small\bfseries}
\DeclareCaptionFont{figpanel}{\tablesize}
\newcommand{\AlgSubitem}[1]{\Statex \hspace{\algorithmicindent}\parbox[t]{\dimexpr\linewidth-\algorithmicindent\relax}{\hangindent=1.25em\hangafter=1\(\bullet\)\hspace{0.45em}#1}}
\algblockdefx[TRAIN]{TrainFor}{EndTrainFor}[1]{\hspace{\algorithmicindent}\textbf{for} #1 \textbf{do}}{\hspace{\algorithmicindent}\textbf{end for}}
\makeatletter\providecommand*{\theHALG@line}{\thealgorithm.\arabic{ALG@line}}\makeatother

\newtheoremstyle{paperplain}{1.5ex plus 0.5ex minus 0.2ex}{1.5ex plus 0.5ex minus 0.2ex}{}{0pt}{\bfseries}{.}{ }{\thmname{#1}\thmnumber{ #2}\thmnote{ (#3)}}
\theoremstyle{paperplain}
\newtheorem{theorem}{Theorem}
\newtheorem{proposition}{Proposition}
\newtheorem{corollary}{Corollary}
\newtheorem*{theoremrestate}{Theorem~\ref{thm:general-residual}}
\newtheorem*{propositionrestate}{Proposition~\ref{prop:complexity}}

\DeclarePairedDelimiter{\abs}{\lvert}{\rvert}
\DeclarePairedDelimiter{\norm}{\lVert}{\rVert}
\newcommand{\Ptwo}{\mathcal{P}_2(\mathbb{R}^d)}
\newcommand{\F}{\mathcal{F}}
\newcommand{\fdv}[2]{\frac{\delta #1}{\delta #2}}
\newcommand{\dd}{\,\mathrm{d}}
\newcommand{\grad}{\nabla}
\newcommand{\dive}{\grad\!\cdot}

\newif\ifTimePotentialResults
\TimePotentialResultstrue

\title{Simulation-Free Learning of Population Dynamics with Wasserstein Lagrangian Residuals}

\author{%
  \textbf{Fedor Sergeev}$^{1,2,3}$\thanks{Correspondence to fedor.sergeev@inf.ethz.ch} \quad
  \textbf{Markus Heinonen}$^{1}$ \quad
  \textbf{Daniel Waxman}$^{1,4}$ \quad
   \textbf{Tim Cooijmans}$^{1}$\\
  \textbf{Ricardo Baptista}$^{1,5}$ \,
  \textbf{Dmitry Batenkov}$^{1}$ \quad\,
  \textbf{Eli Bingham}$^{1}$ \\[6pt]
  \normalfont
  $^{1}$ Basis Research Institute 
  $^{2}$ ETH Zurich 
  $^{3}$ Helmholtz AI 
  $^{4}$ MIT
  $^{5}$ University of Toronto 
}

\iclrfinalcopy  %

\begin{document}
\maketitle

\begin{abstract}
    The dynamics of cells, organisms, and fluids are often modeled as probability distributions evolving over time. Reconstructing and extrapolating this evolution from unpaired snapshots requires assumptions about the underlying process. Wasserstein gradient flows are a common choice, but they cannot describe conservative or periodic dynamics. Lagrangian mechanics in Wasserstein space covers both, but existing methods for learning it are simulation-based: they run a numerical solver at every training step, which makes training expensive. We propose \emph{Double-Stitch}, a simulation-free method that learns these mechanics by penalizing the residual of the equation of motion along a learned population path. We derive this equation from a Clebsch variational principle that does not require gradient velocities, and show that the residual vanishes exactly when the equation holds. We test Double-Stitch on synthetic, single-cell and ocean vortex datasets and find that it matches or outperforms gradient-flow methods and simulation-based WLM on most tasks, while training $4$--$14$ times faster than WLM. We provide a JAX implementation of Double-Stitch at \href{https://github.com/BasisResearch/stitching}{https://github.com/BasisResearch/stitching}.
\end{abstract}

\section{Introduction}

    Learning complex dynamics of populations is a fundamental problem with applications to single-cell biology, animal behavior, and fluid mechanics \citep{schiebinger2019,maury2010,petrovic2025curlyfm}. Individual entities are usually not tracked: the population is observed as unpaired sets of samples (snapshots) measured at several time points. Generally, modeling approaches to this problem can be grouped into three families based on the assumptions they make about the data generating process: flow-based, gradient flow, and Lagrangian flow.

    Flow-based methods do not make explicit assumptions\footnote{Some methods can use a reference process, but it is designed for a specific dataset, not used as a general assumption \citep{petrovic2025curlyfm}.} about the underlying dynamical process \citep{kapusniak2024,atanackovic2025}. They learn a time-dependent velocity field or a stochastic bridge that transports an observed snapshot to a new time point by an optimal path (e.g., minimizing a kinetic energy). Without additional assumptions, these methods are generally not suited for extrapolation beyond the observed time interval \citep{guan2026wlm}.
    
    Gradient flow methods assume that the underlying dynamics follow the steepest descent of a free energy in Wasserstein space, or, in other words, a Wasserstein gradient flow (WGF). The resulting closed system allows learning the time-independent free energy function $\mathcal{F}$ and therefore provides time extrapolation capabilities. However, WGFs are purely dissipative and cannot effectively model dynamics where velocity is not deterministically defined by energy alone (e.g., as in periodic phenomena and conservative systems; see the conservative example in \cref{fig:gaussian-mixture}).

    The Lagrangian methods generalize gradient flows by assuming that the underlying dynamic process is governed by an equation on acceleration instead of velocity (i.e., Newton's second law). This approach was recently proposed for population dynamics \citep{guan2026wlm}, building on Lagrangian formulations in Wasserstein space \citep{neklyudov2024}. However, the existing algorithm (WLM) is simulation-based and computationally expensive because it simulates the trajectory at every training step.

    We propose \emph{Double-Stitch}, a simulation-free approach to learning Lagrangian mechanics. Unlike flow-based methods, this method allows extrapolation; unlike WGF-based methods, it models periodic and conservative systems (see \cref{fig:gaussian-mixture}); and it is computationally cheaper than the simulation-based WLM algorithm. \Cref{tab:method-comparison} summarizes the similarities and differences between the existing methods and our approach.

    Double-Stitch is based on Stitching \citep{heinonen2026}, a simulation-free WGF-based algorithm that learns the dynamics by directly optimizing the trajectory to minimize (a) the distance between the learned and observed snapshots, and (b) the residual of the underlying dynamical law. Double in the name refers to the second-order law, compared to the first-order equation for WGF. We re-derive the Wasserstein Lagrangian law without requiring gradient velocities and show its residual vanishes if and only if the dynamics are satisfied.

    \begin{figure}[t]
    \centering
    \includegraphics[width=\linewidth]{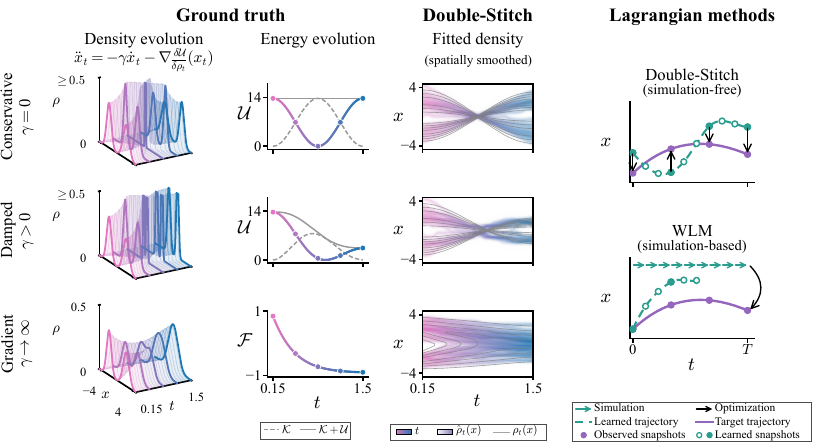}
    \caption{\textbf{Simulation-free fitting across three dynamical regimes.} Left: conservative, damped and gradient-flow examples, with analytic densities, energies and Double-Stitch fits. Right: schematic training procedures for Double-Stitch and WLM. Grey curves show kinetic and total energy for the oscillators; the gradient-flow panel shows free energy. Grey contours show analytic densities.}
    \label{fig:gaussian-mixture}
\end{figure}

    \paragraph{Contributions}
    \begin{itemize}[parsep=0pt]
        \item We propose a Clebsch-based derivation of the Wasserstein Lagrangian mechanics. This derivation does not restrict the velocities to be gradient, making it applicable to initial conditions with rotational components.
        \item We propose Double-Stitch, a simulation-free method that fits a latent population path and a neural potential by penalizing the discretized residual of the Lagrangian mechanics.
        \item We show that Double-Stitch matches or improves upon competing methods for most tasks on synthetic, single-cell, and ocean vortex datasets, while being cheaper to train than WLM.
    \end{itemize}

    \begin{table}[!htb]
\centering
\caption{Comparison of methods for learning population dynamics. Double-Stitch is a simulation-free method that extrapolates in time and learns both gradient and second-order Lagrangian flows.}
\label{tab:method-comparison}
\tablesize
\setlength{\tabcolsep}{4pt}
\begin{threeparttable}
    \begin{tabular*}{\linewidth}{@{\extracolsep{\fill}} l l l c c c c @{}}
        \toprule
        \multirow{2}{*}{Family} & \multirow{2}{*}{Method} & \multirow{2}{*}{Inputs} & \multirow{2}{*}{Extrapolation} & \multicolumn{2}{c}{Encoded flow} & \multirow{2}{*}{Simulation-free} \\
        \cmidrule(lr){5-6}
                                &                         &                         &                                & {\scriptsize Gradient} & {\scriptsize Lagrangian} & \\
        \midrule
        \grouplabel{Flow-based}
            & Flow matching, bridges & $X_i$                    & \xmark          & \xmark              & \xmark          & \cmark                    \\
            & Curly-FM               & $X_i$, reference process & \xmark\tnote{*} & \xmark              & \xmark\tnote{*} & \cmark                    \\
        \addlinespace[3pt]
        \grouplabel{Gradient flow}
            & JKOnet family          & $X_i$                    & \cmark          & \cmark              & \xmark          & \cmark/\xmark\tnote{\ddag} \\
            & Stitching              & $X_i$                    & \cmark          & \cmark              & \xmark          & \cmark                    \\
        \addlinespace[3pt]
        \grouplabel{Lagrangian flow}
            & WLM                    & $X_i,\ (v_0)$            & \cmark          & \cmark\tnote{\dag} & \cmark          & \xmark                    \\
            & Double-Stitch          & $X_i,\ (u_0)$            & \cmark          & \cmark\tnote{\dag} & \cmark          & \cmark                    \\
        \bottomrule
    \end{tabular*}
    \begin{tablenotes}[para]\scriptsize
        \inlinetablenotes
        \item[*] Reference process can be used for some extrapolation and flow encoding.
        \item[\dag] Recovered in the overdamped limit.
        \item[\ddag] Depending on the method.
    \end{tablenotes}
\end{threeparttable}
\end{table}

\section{Wasserstein Lagrangian mechanics}
\label{sec:mechanics}

\subsection{Problem setting}

    Consider the problem of learning the temporal evolution of a probability distribution $\rho: [0, T] \rightarrow \Ptwo$\footnote{Full notation is given in \cref{app:notation}.}. We assume that at certain time points $t_0 < \ldots < t_K$ we observe sets of independent samples from that distribution:
    \begin{equation*}
        X_i = \left\{x_i^{(n)}\right\}_{n=1}^{N_i}, \quad x_i^{(n)} \stackrel{\mathrm{ind.}}{\sim} \rho_{t_i}, \quad i = 0,\dots,K.
    \end{equation*}
    We will call these sets of samples \emph{snapshots}.
    The samples in consecutive snapshots need not be paired, i.e. $x_i^{(n)}$ and $x_{i+1}^{(n)}$ may describe distinct individuals.
    
    Our task is to create a model that, given the observed snapshots, will predict the underlying distribution $\hat \rho_t$ at a given time point $t$. The quality of the prediction is evaluated using the $p$-Wasserstein distance at selected times $\mathcal{T}_{\mathrm{eval}}$:
    \begin{equation*}
        \dfrac{1}{\abs{\mathcal{T}_{\mathrm{eval}}}} \sum_{t \in \mathcal{T}_{\mathrm{eval}}} W_p (\rho_t, \hat \rho_t).
    \end{equation*}
    We call the task \emph{interpolation} if $\max \left(\mathcal{T}_{\mathrm{eval}}\right) < t_K$ and \emph{extrapolation} otherwise.
    
    The dynamics of a continuous-time probability distribution are not identifiable from a finite set of snapshots \citep{weinreb2018,guan2026wlm}. To make the problem well-posed, additional assumptions on the underlying generating process are made.
    
\subsection{Dynamical assumptions}

    One common assumption is the conservation of mass, which leads to the \emph{continuity equation} \citep{ambrosio2008}:
    \begin{equation}
        \partial_t \rho_t + \dive \left(\rho_t u_t \right) = 0,
        \label{eq:continuity}
    \end{equation}
    where $u_t$ is the velocity. Although the continuity equation does constrain the applicable dynamics, it does not close the system because the velocity evolution is not defined.
    
    A large class of systems exhibits relaxation towards lowest free energy (e.g., diffusion). These processes can be formalized as \emph{Wasserstein gradient flows} (WGF), with their velocity following the steepest descent of the free energy \citep{jko1998,otto2001,heinonen2026}:
    \begin{equation}
        u_t = - \grad \fdv{\F}{\rho_t},
        \label{eq:wgf}
    \end{equation}
    where $\F$ is the free energy of the system. This assumption is used in the JKOnet family of methods and Stitching \citep{bunne2022,terpin2024jkonetstar,heinonen2026}.
    
    As a first-order condition with a time-independent free energy, WGF cannot describe systems that exhibit periodicity. In this case, a second-order assumption needs to be adopted. Here the velocity is no longer a variable that is defined by the particle positions, but can instead evolve independently.
    
    \citet{ambrosio2008} show that an absolutely continuous curve satisfying the continuity equation \cref{eq:continuity} admits multiple compatible velocities. Among these, the canonical velocity $v_t$ provides the solution with the minimum population kinetic energy. We define the canonical velocity through Helmholtz decomposition of the total velocity $u_t$ into a gradient field $v_t$ and a rotational field $w_t$ that moves mass without changing the density:
    \begin{equation}
        u_t = v_t + w_t, \qquad \dive (\rho_t w_t) = 0.
        \label{eq:helmholtz-decomp}
    \end{equation}

    \citet{guan2026wlm} use this canonical velocity to derive second-order dynamics from the stationary action of the Caldirola--Kanai (CK) Lagrangian (\cref{eq:background-mechanical-action}) for linear damping \citep{caldirola1941,kanai1948}:
    \begin{equation}
        \underbrace{\partial_t v_t + (v_t \cdot \grad) v_t}_{\text{acceleration}} = \underbrace{- \gamma v_t}_{\text{friction}} \underbrace{- \grad \fdv{\mathcal{U}}{\rho_t}}_{\text{potential force}}.
        \label{eq:wlm}
    \end{equation}
    The left-hand side is the acceleration of a particle moving with the flow. Here $\gamma$ is the friction coefficient and $\mathcal{U}$ is the potential energy. For autonomous systems $\gamma = 0$ corresponds to energy conservation, while $\gamma \rightarrow \infty$ recovers WGF (\cref{cor:caldirola-kanai}).
    
    The canonical velocity describes the instantaneous change in density, but need not coincide with the observed velocity. For second-order dynamics, however, velocity is part of the state and determines the subsequent motion. We therefore seek a residual for the full velocity $u_t$, allowing circulation.

\subsection{Equations of motion}
\label{sec:generalized}

    To derive the equations of motion for the full velocity $u_t$, we use the Clebsch action principle \citep{holm1998,cotter2007clebsch}. In addition to continuity, this formulation imposes transport of particle labels: each particle keeps its initial label as it moves. We use the Caldirola--Kanai action
    \begin{equation}
        \mathcal A=\int_0^T\mathrm{e}^{\gamma t}\big(\mathcal K-\mathcal U\big)\dd t,\qquad
        \mathcal K=\frac12\int\rho_t(x)\norm{u_t(x)}^2\dd x,
        \label{eq:background-mechanical-action}
    \end{equation}
    where $\mathcal U$ is the potential energy and $\gamma\ge0$ is a constant friction coefficient. The exponential factor produces linear damping, while the kinetic energy retains the full velocity, as in geometric hydrodynamics \citep{khesin2021}.

    \begin{theorem}[Lagrangian equations of motion]
    \label{thm:general-residual}
        Under the regularity assumptions in \cref{app:clebsch-derivation}, let $(\rho_t,u_t)$ satisfy continuity. For fixed $\mathcal U$ and $\gamma$, stationarity of \cref{eq:background-mechanical-action} under the Clebsch constraints, with density and labels fixed at the temporal endpoints, implies
        \begin{equation}
            \partial_tu_t+(u_t\cdot\grad)u_t
            =-\gamma u_t-\grad\fdv{\mathcal U}{\rho_t}.
            \label{eq:general-motion-law}
        \end{equation}
        The velocity $u_t$ need not be a gradient.
    \end{theorem}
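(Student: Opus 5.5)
We set up the Clebsch action with Lagrange multipliers: a scalar $\phi_t$ enforcing continuity and a vector-valued multiplier $\beta_t$ enforcing transport of the labels $\ell_t:\mathbb{R}^d\to\mathbb{R}^d$, $\partial_t\ell_t+(u_t\cdot\grad)\ell_t=0$. The constrained action is
\begin{equation*}
\mathcal S=\int_0^T\!\Big[\mathrm{e}^{\gamma t}(\mathcal K-\mathcal U)+\int\phi_t\big(\partial_t\rho_t+\dive(\rho_tu_t)\big)\dd x+\int\beta_t\cdot\big(\partial_t\ell_t+(u_t\cdot\grad)\ell_t\big)\dd x\Big]\dd t .
\end{equation*}
We vary independently in $u,\rho,\ell,\phi,\beta$. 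Variations of $\rho$ and $\ell$ vanish at $t=0,T$, so the boundary terms from integrating by parts in time drop out. Under the regularity assumptions, all spatial boundary terms vanish by decay at infinity.

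\textbf{Stationarity conditions.}
\begin{itemize}[parsep=0pt]
\item Varying $u$ gives the Clebsch representation
\begin{equation*}
\mathrm{e}^{\gamma t}\rho u=\rho\grad\phi-(\grad\ell)^\top\beta .
\end{equation*}
Writing $\beta=\rho\,\mathrm{e}^{\gamma t}\tilde\beta$, this becomes
\begin{equation*}
u=\mathrm{e}^{-\gamma t}\grad\phi-(\grad\ell)^\top\tilde\beta,
\end{equation*}
which need not be a gradient.
\item Varying $\rho$ gives
\begin{equation*}
\mathrm{e}^{\gamma t}\Big(\tfrac12\norm{u}^2-\fdv{\mathcal U}{\rho}\Big)-\partial_t\phi-u\cdot\grad\phi=0 .
\end{equation*}
\item Varying $\ell$ gives
\begin{equation*}
\partial_t\beta+\dive(u\otimes\beta)=0 .
\end{equation*}
Using continuity, this is equivalent to $\partial_t\tilde\beta+(u\cdot\grad)\tilde\beta=-\gamma\tilde\beta$.
\item Varying $\phi$ and $\beta$ recovers continuity and label transport.
\end{itemize}

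\textbf{Differentiating the Clebsch representation.} We compute the material derivative $D_t=\partial_t+u\cdot\grad$ of $u$. Three facts are needed:
\begin{itemize}[parsep=0pt]
\item the commutator identity $D_t\grad f=\grad D_tf-(\grad u)^\top\grad f$;
\item $D_t\ell=0$, which gives $D_t(\grad\ell)=-(\grad\ell)(\grad u)$;
\item the $\tilde\beta$ equation above.
\end{itemize}

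For the gradient part, set $\psi=\mathrm{e}^{-\gamma t}\phi$. The $\rho$-equation becomes
\begin{equation*}
D_t\psi=\tfrac12\norm{u}^2-\fdv{\mathcal U}{\rho}-\gamma\psi .
\end{equation*}
Hence
\begin{equation*}
D_t\grad\psi=\grad\big(\tfrac12\norm{u}^2\big)-\grad\fdv{\mathcal U}{\rho}-\gamma\grad\psi-(\grad u)^\top\grad\psi .
\end{equation*}

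For the label part,
\begin{equation*}
D_t\big[(\grad\ell)^\top\tilde\beta\big]=-(\grad u)^\top(\grad\ell)^\top\tilde\beta-\gamma(\grad\ell)^\top\tilde\beta .
\end{equation*}

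Subtracting the label part from the gradient part and using $u=\grad\psi-(\grad\ell)^\top\tilde\beta$, we obtain
\begin{equation*}
D_tu=\grad\big(\tfrac12\norm{u}^2\big)-(\grad u)^\top u-\gamma u-\grad\fdv{\mathcal U}{\rho}.
\end{equation*}
Since $\grad(\tfrac12\norm{u}^2)=(\grad u)^\top u$, the first two terms cancel. This yields \cref{eq:general-motion-law}.

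\textbf{Main obstacle.} The delicate step is the index bookkeeping in $D_t[(\grad\ell)^\top\tilde\beta]$. The transposed Jacobian terms must combine exactly into $-(\grad u)^\top$ applied to the label part, so that together with the gradient part they reconstruct $-(\grad u)^\top u$ and cancel against $\grad(\tfrac12\norm{u}^2)$. A second point is the weight: $\beta$ must be weighted by $\rho\,\mathrm{e}^{\gamma t}$ so that the damping term comes out uniformly as $-\gamma u$. I would check both in index notation first. Finally, no gradient structure of $u$ is used anywhere in the argument, which justifies the last sentence of the theorem.
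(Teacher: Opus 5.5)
Your proof is correct and follows the paper's own route. It uses the same Clebsch action with continuity and label-transport multipliers and the same stationarity conditions. It then differentiates the Clebsch velocity along particles with the same identity $D_t\nabla f=\nabla D_t f-(\nabla u)^\top\nabla f$, and closes with $(\nabla u)^\top u=\nabla(\tfrac12\|u\|^2)$.

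The only difference is how the multipliers are normalized. The paper places them inside the $\mathrm{e}^{\gamma t}$ weight and writes the label multiplier as $\rho_t\mu_t$, so your rescaled $\psi=\mathrm{e}^{-\gamma t}\phi$ and $-\tilde\beta$ are exactly the paper's $\phi_t$ and $\mu_t$.
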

    \begin{proof} See \cref{app:clebsch-derivation}.
    \end{proof}

    \begin{corollary}[Recovered dynamics]
        \label{cor:caldirola-kanai}
        A smooth solution of \cref{eq:general-motion-law} with a gradient initial velocity remains canonical and satisfies the WLM dynamics \cref{eq:wlm}. In this canonical class, setting $\gamma=0$ gives Wasserstein Hamiltonian flow (WHF) \citep{ambrosiogangbo2008,chow2020whf}. For a time-independent energy, zero damping conserves $\mathcal K+\mathcal U$, also for general velocities; the formal overdamped limit $\gamma\to\infty$, in slow time $\tau=t/\gamma$, recovers the Wasserstein gradient flow \cref{eq:wgf}.
    \end{corollary}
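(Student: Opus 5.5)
The corollary bundles four claims, and I will prove them in order of increasing delicacy.

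\textbf{Gradient velocities stay gradient.} The tool is the vorticity form of \cref{eq:general-motion-law}. Write $(u\cdot\grad)u=\grad\tfrac12\norm{u}^2+\omega\cdot u$ with $\omega=\grad u-(\grad u)^\top$ (in $d=3$ this is $\omega\times u$). Taking the antisymmetrized derivative of \cref{eq:general-motion-law} kills every gradient term, including $-\grad\fdv{\mathcal U}{\rho_t}$, and leaves
\begin{equation*}
\partial_t\omega+\mathcal L_u\omega=-\gamma\omega,
\end{equation*}
where $\mathcal L_u$ is the Lie derivative of the two-form $\omega=\mathrm{d}u^\flat$. Along the flow map $X_t$ of $u$, this says $\mathrm{e}^{\gamma t}X_t^*\omega_t$ is constant in time. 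So $\omega_0=0$ forces $\omega_t=0$ for all $t$, and by smoothness $u_t=\grad\phi_t$ on $\R^d$.

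A gradient field $u_t=\grad\phi_t$ is the minimal-energy velocity. Any $w$ with $\dive(\rho_t w)=0$ satisfies $\int\rho_t\, u_t\cdot w=-\int\phi_t\,\dive(\rho_t w)=0$, assuming enough decay to integrate by parts. Hence in \cref{eq:helmholtz-decomp} we have $w_t=0$ and $u_t=v_t$. Substituting into \cref{eq:general-motion-law} gives \cref{eq:wlm}. Setting $\gamma=0$ in this canonical class, $\partial_t\phi+\tfrac12\norm{\grad\phi}^2+\fdv{\mathcal U}{\rho}=c(t)$ together with continuity is exactly the Hamiltonian system of WHF, after absorbing $c(t)$ into $\phi$.

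\textbf{Energy conservation for $\gamma=0$, general $u$.} I would compute $\frac{d}{dt}\mathcal K=\int\tfrac12\partial_t\rho\norm{u}^2+\int\rho\,u\cdot\partial_tu$. For the first term, use continuity and integrate by parts to get $\int\rho\,u\cdot\grad\tfrac12\norm{u}^2=\int\rho\,u\cdot(u\cdot\grad)u$. For the second, insert \cref{eq:general-motion-law} with $\gamma=0$. The two material-derivative terms cancel, leaving
\begin{equation*}
\frac{d}{dt}\mathcal K=-\int\rho\,u\cdot\grad\fdv{\mathcal U}{\rho}=\int\partial_t\rho\,\fdv{\mathcal U}{\rho}=-\frac{d}{dt}\mathcal U.
\end{equation*}
The middle equality is continuity plus integration by parts. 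The last equality uses that $\mathcal U$ is time-independent.

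\textbf{Overdamped limit.} This is the most delicate step, and I will treat it only formally, as the statement says. Set $\tau=t/\gamma$ and rescale the velocity as $\tilde u=\gamma u$, so that continuity $\partial_\tau\rho+\dive(\rho\tilde u)=0$ keeps an $O(1)$ transport rate in slow time. Then $\partial_tu=\gamma^{-2}\partial_\tau\tilde u$ and $(u\cdot\grad)u=\gamma^{-2}(\tilde u\cdot\grad)\tilde u$, while $\gamma u=\tilde u$. Equation \cref{eq:general-motion-law} becomes
\begin{equation*}
\gamma^{-2}\bigl(\partial_\tau\tilde u+(\tilde u\cdot\grad)\tilde u\bigr)=-\tilde u-\grad\fdv{\mathcal U}{\rho}.
\end{equation*}
Sending $\gamma\to\infty$ with the bracket bounded gives $\tilde u=-\grad\fdv{\mathcal U}{\rho}$, which is \cref{eq:wgf} with $\F=\mathcal U$.

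The main obstacle is rigor in two places. The first is justifying the Lie transport of $\omega$ and the integrations by parts; for this I rely on the smoothness and decay assumptions. The second is the singular limit: I only claim it at the formal level and do not prove convergence of solutions.
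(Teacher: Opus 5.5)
Your proof is correct. For the main claim, that a gradient initial velocity stays gradient, it takes a genuinely different route from the paper.

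\textbf{How the routes differ.} The paper stays inside the Clebsch formulation of \cref{app:clebsch-derivation}. It writes $u_t=\grad\phi_t+(\grad\ell_t)^\top\mu_t$ and picks the representation with $\mu_0=0$. The multiplier evolution $(\partial_t+u_t\cdot\grad)\mu_t=-\gamma\mu_t$ then gives $\mu_t\equiv0$, so $u_t=\grad\phi_t$, where $\phi_t$ is the continuity multiplier and already satisfies the damped Hamilton--Jacobi equation in the form WLM uses. You instead take the exterior derivative of \cref{eq:general-motion-law} itself, which gives the damped vorticity transport $X_t^*\omega_t=\mathrm e^{-\gamma t}\omega_0$. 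You then conclude with the invertibility of the flow map and the Poincar\'e lemma on $\R^d$.

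\textbf{What each buys.} Your argument needs no Clebsch variables and no choice of representation. It applies directly to an arbitrary smooth solution of the equation of motion, which is what the statement quantifies over. The paper's argument tacitly needs uniqueness to identify such a solution with the one the Clebsch system generates. You also get a quantitative damped Kelvin theorem for free, and you verify the minimal-energy (canonical) property through $\rho_t$-orthogonality, which the paper only asserts. The paper's version is shorter given its machinery and hands you the Hamilton--Jacobi potential directly.

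\textbf{The remaining parts.} Your energy argument is the same as the paper's; the paper keeps $\gamma$ and obtains $\frac{d}{dt}(\mathcal K+\mathcal U)=-\gamma\int\rho_t\norm{u_t}^2\dd x$. Your Eulerian rescaling $\tilde u=\gamma u$ for the overdamped limit is equivalent to the paper's particle rescaling $\widetilde x_\tau=x_{\gamma\tau}$.

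\textbf{A sign slip in the energy chain.} The correct identity is
$\int\rho\,u\cdot\grad\fdv{\mathcal U}{\rho}=-\int\fdv{\mathcal U}{\rho}\dive(\rho u)=\int\partial_t\rho\,\fdv{\mathcal U}{\rho}=\frac{d}{dt}\mathcal U$.
So your middle term should carry a minus sign, and your last equality, $\int\partial_t\rho\,\fdv{\mathcal U}{\rho}=-\frac{d}{dt}\mathcal U$, is also off by a sign. The two slips cancel, so the conclusion $\frac{d}{dt}\mathcal K=-\frac{d}{dt}\mathcal U$ stands.
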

    \begin{proof} See \cref{app:recovered-flows}.
    \end{proof}

    \paragraph{Why general velocities}
    \label{sec:velocity-state}
    Observed particle velocities need not be canonical. Although replacing them by the canonical velocity preserves the instantaneous change in density, it can alter subsequent motion because velocity is part of the state in second-order dynamics. A residual evaluated along particle trajectories should therefore retain their full velocity. The Clebsch derivation provides this formulation, allowing observed velocities to be used directly as initial conditions without projecting them onto gradient fields. \Cref{app:velocity-comparison} illustrates how this projection can change the subsequent density evolution. The labels and the Lagrange multipliers (see \cref{app:clebsch-derivation}) are needed for the derivation and are not learned.

\section{Double-Stitch}
\label{sec:method}

    \paragraph{Lagrangian residual} We turn the equation of motion in \cref{eq:general-motion-law} into a loss by squaring its discrepancy, averaging over the population, and integrating over time. Along a particle trajectory satisfying $\mathrm{d}x_t/\mathrm{d}t=u_t(x_t)$, this gives
    \begin{equation}
        \mathcal R[\rho,u]
        =\mathbb E_{x_0\sim\rho_0}\!\left[\int_0^T
          \norm*{\ddot x_t+\gamma\dot x_t
           +\grad\fdv{\mathcal U}{\rho_t}(x_t)}^2\dd t\right].
        \label{eq:residual-pullback}
    \end{equation}
    Under the assumptions of \cref{thm:general-residual}, this nonnegative residual vanishes exactly when the equation of motion holds (and vice versa, see \cref{app:residual-theorem}). It can be evaluated directly on a fitted path without simulating the dynamics.

    \paragraph{Objective} Following Stitching, we learn the population dynamics by fitting the learned distribution curve $\hat \rho$ directly to the observed snapshots as well as to the assumed dynamics \citep{heinonen2026}. This means that we aim to minimize a loss with two components: \emph{data loss} $\mathcal{L}_{\mathrm{data}}$, responsible for fitting the observed snapshots, and \emph{residual loss} $\mathcal{L}_{\mathrm{res}}$, responsible for fitting the dynamics:
    \begin{equation}
        \mathcal L \triangleq \mathcal L_{\mathrm{data}} +\lambda\,\mathcal L_{\mathrm{res}},
        \label{eq:stitching-losses}
    \end{equation}
    where $\lambda$ is a balancing coefficient. The data loss is computed as a sum of distances between the observed and learned snapshots at the observation times $\mathcal T_{\mathrm{obs}}$
    \begin{equation*}
        \mathcal L_{\mathrm{data}}\triangleq \sum_{t_i\in\mathcal T_{\mathrm{obs}}}\mathcal D(\hat\rho^\theta_{t_i}, X_i),
    \end{equation*}
    where $\mathcal D$ is a divergence between the learned density and the observed samples, typically the forward KL (negative log-likelihood). The residual loss requires parametrization and discretization of the Lagrangian residual \cref{eq:residual-pullback}.

    \paragraph{Distribution parameterization}\label{sec:parametrization} We parametrize the distribution curve $\hat \rho \rightarrow \hat\rho^\theta$ using kernel density estimation (KDE):
    \begin{equation}
        \hat \rho_t^\theta(x) =\sum_{k=1}^P w_k^\theta\phi(x-x_{t,k}^\theta), \qquad \sum_{k=1}^P w_k^\theta=1.
        \label{eq:kde-path}
    \end{equation}
    Here $\phi$ is a Gaussian kernel, $x_{t,k}^\theta$ is the learnable center of kernel $k=1,\dots,P$ at time $t$, and $w_k^{\theta}$ is its weight in the overall mixture. We can then derive (see \cref{app:kde-velocity}) a velocity field satisfying continuity for this KDE path
    \begin{equation}
        u_t^\theta(x) =\sum_{k=1}^P r_{t,k}^\theta(x)\,\dot x_{t,k}^\theta, \qquad r_{t,k}^\theta(x) =\frac{w_k^\theta\phi(x-x_{t,k}^\theta)} {\sum_{l=1}^P w_l^\theta\phi(x-x_{t,l}^\theta)}, \qquad \sum_{k=1}^P r_{t,k}^\theta(x)=1,
        \label{eq:kde-velocity}
    \end{equation}
    where $r_{t,k}^{\theta}$ are the responsibilities of the kernels at $x$.
    
    \paragraph{Potentials and forces parametrization} We assume that the potential energy has the form 
    \begin{equation}
        \mathcal U =c_V\underbrace{\int_{\mathbb R^d}V(x)\,\dd\rho(x)}_{\text{external potential}} +c_H\underbrace{\int_{\mathbb R^d}\log\rho(x)\,\dd\rho(x)}_{\text{internal energy}} +\frac{c_W}{2}\underbrace{\int_{\mathbb R^d\times\mathbb R^d}W(x-x')\,\dd\rho(x)\,\dd\rho(x')}_{\text{particle interactions}}.
        \label{eq:stitching-functional}
    \end{equation}
    This form is common in applications \citep{villani,heinonen2026} and usefully has an analytic gradient of the variation with respect to paths $\grad \frac{\delta \mathcal{U}}{\delta \rho}$ (see \cref{eq:stitching-functional-gradient} in \cref{app:wasserstein-gradient}). We parameterize the external potential and particle interactions with neural networks $V = V_\theta$ and $W = W_\theta$ and compute the entropy term from the KDE approximation, resulting in a parameterized potential $\mathcal U_\theta[\rho]$. In Stitching, this functional is the free energy $\F$ of the gradient flow \cref{eq:wgf}, whose entropy term drives diffusion; here it is the potential energy $\mathcal U$ in the second-order law \cref{eq:general-motion-law}, and the same term acts as an isothermal pressure with force $-c_H\grad\log\rho$.
        
    \paragraph{Empirical objective} To compute the data loss numerically, we simply compute it for the learned KDE density $\hat\rho^\theta_t$ and the observed set of particles. 
    
    To compute the residual loss, we approximate the population average in \cref{eq:residual-pullback} by a weighted sum over the learned center paths. We evaluate density-dependent forces using the KDE and approximate the interaction convolution using the centers (\cref{app:residual-theorem}). The acceleration is the derivative along each center path; this particle approximation differs from the exact material acceleration of the KDE mean velocity, as detailed in \cref{app:kde-velocity}. Finally, we approximate the time integral by quadrature and the path derivatives by finite differences:
    \begin{equation}
        \mathcal L_{\mathrm{res}}(\theta) = h\sum_{j=0}^{M}\sum_{k} w_k \Big\|\Delta_h^2 x_{t_j,k} +\gamma\,\Delta_h x_{t_j,k} +\grad\fdv{\mathcal U_\theta}{\rho}\big[\hat\rho_{t_j}^\theta\big](x_{t_j,k})\Big\|^2,
        \label{eq:particle-second-order-residual}
    \end{equation}
    where $\Delta_h$ and $\Delta_h^2$ are the finite differences of the center positions on the time grid $t_0<\dots<t_M$ with spacing $h$, central at the interior nodes and one-sided at the two endpoints, and the terms at $j=0$ and $j=M$ are halved (trapezoidal rule). Here $\gamma$ is a learnable friction coefficient.

    \paragraph{Initial velocity constraint} When paired initial positions and velocities $\{(x_0^{(n)},v_0^{(n)})\}_{n=1}^{N_0}$ are available, we add $\lambda_{v_0}\mathcal L_{v_0}$ to the objective in \cref{eq:stitching-losses}. Using the KDE responsibilities from \cref{eq:kde-velocity} and a forward difference on the first grid interval, we set
    \begin{equation}
        \mathcal L_{v_0}(\theta)=\frac{1}{N_0}\sum_{n=1}^{N_0}\left\|\sum_{k=1}^{P}r_{t_0,k}^{\theta}(x_0^{(n)})\,\frac{x_{t_1,k}^{\theta}-x_{t_0,k}^{\theta}}{h}-v_0^{(n)}\right\|^2.
        \label{eq:initial-velocity-loss}
    \end{equation}
    This softly constrains the initial velocity field without requiring velocity observations at later times.
        
    \begin{proposition}[Training cost]\label{prop:complexity} Consider the training cost of one Double-Stitch and WLM training iteration on the same task. Suppose that the number of time steps $M$, number of particles $P$, and the data loss evaluation time are the same. Assume that $C(P)$ is the time it takes to make a force evaluation ($P$ particles, including backpropagation), and suppose the hardware can run $B$ force evaluations in parallel. Then, apart from the data loss, one Double-Stitch iteration takes $O(\lceil M/B\rceil\,C(P))$ time while WLM will take $O(M\,C(P))$.
    \end{proposition}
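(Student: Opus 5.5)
The plan is to count the force evaluations each method needs in one iteration and identify which of them depend on one another, both in the forward pass and in backpropagation. In Double-Stitch the only nontrivial work outside the data loss is computing the residual \cref{eq:particle-second-order-residual}. The centers $x^\theta_{t_j,k}$ on the grid $t_0<\dots<t_M$ are free learnable parameters, or outputs of a path network evaluated independently at each $t_j$. The first step is therefore to note that the finite differences $\Delta_h x_{t_j,k}$ and $\Delta_h^2 x_{t_j,k}$ cost $O(P)$ per node and need only neighbouring positions. The force term $\grad\fdv{\mathcal U_\theta}{\rho}[\hat\rho^\theta_{t_j}](x_{t_j,k})$ at node $j$ depends only on the centers at time $t_j$, through the KDE entropy term and the center approximation of the interaction convolution. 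So the $M+1$ force evaluations are mutually independent, with no data dependence between nodes.

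Given independence, the $M+1$ evaluations can be batched into $\lceil (M+1)/B\rceil = O(\lceil M/B\rceil)$ rounds, each costing $C(P)$, since by assumption $B$ evaluations run in parallel in the time of one. Backpropagation through the residual has the same structure: the loss is a sum over $j$ of terms that each depend on one force evaluation and $O(1)$ neighbouring centers. Hence the reverse pass also splits into independent per-node pullbacks, and $C(P)$ already includes backpropagation by hypothesis. The remaining bookkeeping (finite differences, weighting by $w_k$, quadrature sum) costs $O(MP)$ arithmetic. This is dominated by, or absorbed into, $C(P)$ per round, since any force evaluation must at least touch all $P$ particles, so $C(P)=\Omega(P)$. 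This gives $O(\lceil M/B\rceil C(P))$.

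For WLM, the forward pass is a numerical integration of \cref{eq:wlm} (or the particle ODE) over $M$ steps. Step $j+1$ requires the state produced at step $j$, and hence the force evaluated at step $j$. This sequential dependence forms a chain of length $M$, so the parallel hardware cannot overlap evaluations across time. At best it parallelizes within one evaluation, and that is already accounted for in $C(P)$. The same chain appears in reverse-mode differentiation, or in an adjoint solve, giving another $M$ sequential pullbacks. The total is therefore $O(M\,C(P))$. This is a matching upper bound for WLM: the proposition only asserts $O(\cdot)$, so I will simply exhibit the sequential schedule.

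The main obstacle is making precise what ``$B$ force evaluations in parallel'' means, so that the parallel batching for Double-Stitch is legitimate while WLM is not credited with the same speedup. I would state this as a cost model: independent evaluations can be grouped $B$ at a time at cost $C(P)$ per group, while dependent evaluations must be executed sequentially. I would also remark that any multistage integrator, such as Runge--Kutta, only changes constants, and that the data loss is excluded on both sides by hypothesis.
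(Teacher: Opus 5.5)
Your proposal is correct and follows the paper's own argument. In Double-Stitch, each of the $M+1$ per-node force evaluations only reads neighbouring stored positions, so these evaluations can be batched $B$ at a time and their gradients summed. In WLM, the leapfrog unroll forces $M$ sequential evaluations.

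Your explicit cost model and your treatment of the backward pass are somewhat more careful than the paper's proof. One small correction: the $O(MP)$ bookkeeping is absorbed into $O(\lceil M/B\rceil\,C(P))$ only if each node's $O(P)$ finite-difference work is bundled with that node's force evaluation inside the parallel rounds. Done serially, it is not dominated once $B$ is large.
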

    \begin{proof}
        See \cref{app:proof-complexity}; it follows directly from the pseudocode of the algorithms in \cref{app:stitching-algorithm}.
    \end{proof}

\section{Experiments}
\label{sec:experiments}

    We first demonstrate the difference between the Lagrangian flow Double-Stitch and the gradient flow Stitching on synthetic datasets, then compare the performance of our method against published results on real single-cell and fluid dynamics datasets.
    
    We consider multiple prediction modes: direct, push-forward, and rollout. The direct method is applicable to particle-based methods (i.e., Stitching and Double-Stitch) and essentially scores the particles that are learned directly. Push-forward takes the latest observed snapshot and unrolls the trajectory until the required test time using the learned field. The rollout does the same, but unrolls from the earliest observed snapshot. 
    
    We provide additional visualizations, expanded method comparisons, demonstrations of non-identifiability, as well as sensitivity and ablation studies in \cref{app:additional-results}. 

\subsection{Synthetic demonstration}
\label{sec:demonstrations}

    First, to illustrate the benefit of second-order modeling, we consider three Gaussian-mixture evolutions: conservative harmonic motion, damped harmonic motion with $\gamma=1$, and Wasserstein gradient flow. The conservative flow reverses the mixture and conserves total energy. The damped flow is the same process, but the damping reduces the oscillation amplitude. The gradient flow relaxes towards a single Gaussian. \Cref{fig:gaussian-mixture} shows their densities, energy profiles and Double-Stitch fits (dataset details are given in \cref{app:data-details}).

    We train Stitching \citep{heinonen2026} and Double-Stitch with $50$ centers, $50$ time nodes and $5000$ updates on the same three snapshots, holding out two times. For a single seed, the test $W_1$ of Stitching versus Double-Stitch is $1.324$ versus $0.389$ on conservative flow, $0.711$ versus $0.366$ on damped flow, and $0.273$ versus $0.292$ on gradient flow. \Cref{fig:synthetic-density-comparison} in \cref{app:synthetic-results} visualizes ground truth and fitted densities produced by both methods.

\subsection{Single-cell dynamics}
\label{sec:eb}

    Second, we consider the embryoid-body (EB) single-cell RNA dataset \citep{moon2019}. It describes the evolution of human embryonic stem cells over 5 snapshots. 3 different training and evaluation protocols have been proposed for this dataset. First, train on all snapshots, holding out $30\%$ of the cells of each for evaluation \citep{terpin2024jkonetstar,persiianov2025}. Second, leave one snapshot out for validation \citep{tong2020}. Third, leave two snapshots out for validation and average the performance on them \citep{shen2025}. We refer to these protocols as leave-particles out (LPO), leave-one-out (L1O), and leave-two-out (L2O) respectively. Since the key methods we compare to, Stitching \citep{heinonen2026} and WLM \citep{guan2026wlm} use different protocols, we rerun them alongside our method for consistency. Here we score the particles learned by Double-Stitch directly and use pushforward evaluation for WLM (see dataset details in \cref{app:data-details} and training details in \cref{app:training-details}).

    \begin{table}[!htb]
    \centering
    \setlength{\tabcolsep}{4pt}
    \caption{\textbf{Double-Stitch matches performance of the best gradient and Lagrangian methods on the EB dataset} ($W_1$ and $W_2$, mean \textpm{} std, 5 seeds). Lower is better. Dashes indicate unreported cells. $V(x)$ and $V(x,t)$ denote static and time-dependent potentials. Stitching and WLM rows are our reruns (\cref{app:reproduction-audit}); \cref{app:eb-results} also lists their published values.}
    \label{tab:eb-comparison}
    \scriptsize
    \resizebox{\ifdim\width>\linewidth\linewidth\else\width\fi}{!}{%
    \begin{tabular}{@{}>{\scriptsize\itshape\raggedright\arraybackslash}p{3.2cm}lrrr@{}}
\toprule
 & Method & \multicolumn{1}{c}{LPO, $W_1$} & \multicolumn{1}{c}{L1O, $W_1$} & \multicolumn{1}{c}{L2O, $W_2$} \\
\midrule
\multirow{11}{=}{Transport, bridges \& neural flows} & Neural SDE & $0.820\phantom{{}\pm 0.000}$ & \tbdnum & \tbdnum \\
 & TrajectoryNet & $0.930\phantom{{}\pm 0.000}$ & $0.848\phantom{{}\pm 0.000}$ & $1.980\phantom{{}\pm 0.000}$ \\
 & SB-FBSDE & $0.840\phantom{{}\pm 0.000}$ & \tbdnum & \tbdnum \\
 & NLSB & $0.780\phantom{{}\pm 0.000}$ & $0.970\phantom{{}\pm 0.000}$ & \tbdnum \\
 & OT-CFM & $0.770\phantom{{}\pm 0.000}$ & $0.790 \pm 0.068$ & \tbdnum \\
 & WLF-OT & $0.740\phantom{{}\pm 0.000}$ & $0.742 \pm 0.012$ & \tbdnum \\
 & WLF-SB & $0.730\phantom{{}\pm 0.000}$ & $0.746 \pm 0.016$ & \tbdnum \\
 & Vanilla-SB & \tbdnum & \tbdnum & $1.520\phantom{{}\pm 0.000}$ \\
 & MMSB & \tbdnum & \tbdnum & $1.420\phantom{{}\pm 0.000}$ \\
 & DMSB & \tbdnum & \tbdnum & $1.290\phantom{{}\pm 0.000}$ \\
 & OT-MFM & \tbdnum & $0.713 \pm 0.039$ & \tbdnum \\
\grouprule
\multirow{5}{=}{WGF energy models} & JKOnet & $1.340\phantom{{}\pm 0.000}$ & \tbdnum & \tbdnum \\
 & JKOnet$^*$, $V(x)$ & $1.120\phantom{{}\pm 0.000}$ & \tbdnum & $1.840\phantom{{}\pm 0.000}$ \\
 & JKOnet$^*$, $V(x,t)$ & $0.730\phantom{{}\pm 0.000}$ & \tbdnum & $3.590\phantom{{}\pm 0.000}$ \\
 & iJKOnet, $V(x)$ & $1.050\phantom{{}\pm 0.000}$ & \tbdnum & $1.120\phantom{{}\pm 0.000}$ \\
 & iJKOnet, $V(x,t)$ & $0.580\phantom{{}\pm 0.000}$ & \tbdnum & $0.920\phantom{{}\pm 0.000}$ \\
\grouprule
\multirow{2}{=}{Simulation-free WGF} & Stitching, $V(x)$ & $0.618 \pm 0.007$ & $0.796 \pm 0.014$ & $0.821 \pm 0.019$ \\
 & Stitching, $V(x,t)$ & $0.581 \pm 0.010$ & $0.717 \pm 0.010$ & $0.786 \pm 0.008$ \\
\addlinespace[5pt]
Simulation-based WLM & WLM & $0.682 \pm 0.009$ & $0.734 \pm 0.023$ & $0.823 \pm 0.041$ \\
\addlinespace[5pt]
\multirow{2}{=}{Simulation-free WLM} & Double-Stitch, $V(x)$ & $0.618 \pm 0.001$ & $0.733 \pm 0.004$ & $0.794 \pm 0.004$ \\
 & Double-Stitch, $V(x,t)$ & $0.555 \pm 0.005$ & $0.717 \pm 0.008$ & $0.799 \pm 0.003$ \\
\bottomrule
\end{tabular}
}
\end{table}

    We show the results in \cref{tab:eb-comparison} (flow-based and JKOnet models are adapted from \citet[Tables~2 and~5]{heinonen2026} and \citet[Table~3]{guan2026wlm}). We find that Double-Stitch outperforms or matches WLM using both static and time-variable potential. It also performs better or the same as Stitching for all cases except one (L2O for time dependent potential). Both comparisons use our reruns (see the settings in \cref{app:reproduction-audit}); \cref{app:eb-results} also lists the published values. Visualization of predictions and extended comparison tables are given in \cref{app:eb-results}.

\subsection{Ocean vortex dynamics}
\label{sec:oceans}

    Third, we evaluate the methods on two datasets of ocean vortex dynamics in the Gulf of Mexico \citep{guan2026wlm}. The first one captures a small vortex and provides an interpolation task: it consists of 9 snapshots of 111 particle positions, with even snapshots used for training and odd ones for testing. The second one captures a large vortex and provides a forecasting task: it consists of 11 snapshots of 400 particle positions and uses the final snapshot for testing (see data details in \cref{app:data-details}).

    Following \citet{guan2026wlm}, we provide our method with the initial velocity through the initial-velocity constraint introduced in \cref{eq:initial-velocity-loss}. The settings and evaluation protocols are described in \cref{app:training-details}. At evaluation, both methods receive the observed velocity in pushforward snapshots and the initial velocity for a full rollout.

    \begin{table}[!htb]
    \centering
    \setlength{\tabcolsep}{4pt}
    \caption{\textbf{Double-Stitch outperforms gradient methods and matches the best Lagrangian method on ocean vortex interpolation} ($W_1$, mean \textpm{} std, 3 seeds). Lower is better. Dashes indicate unreported cells. $V(x)$ and $V(x,t)$ denote static and time-dependent potentials. Stitching and WLM rows are our reruns (\cref{app:reproduction-audit}); \cref{app:oceans-results} also lists WLM's published values.}
    \label{tab:oceans-comparison}
    \scriptsize
    \resizebox{\ifdim\width>\linewidth\linewidth\else\width\fi}{!}{%
    \begin{tabular}{@{}>{\scriptsize\itshape\raggedright\arraybackslash}p{3.2cm}lrrr@{}}
\toprule
 & Method & \multicolumn{1}{c}{Interp. pushfwd.} & \multicolumn{1}{c}{Forecast pushfwd.} & \multicolumn{1}{c}{Forecast rollout} \\
\midrule
\multirow{4}{=}{Action \& continuity} & AM & $0.376\phantom{{}\pm 0.000}$ & \tbdnum & \tbdnum \\
 & UAM & $0.383\phantom{{}\pm 0.000}$ & \tbdnum & \tbdnum \\
 & sAM & $0.378\phantom{{}\pm 0.000}$ & \tbdnum & \tbdnum \\
 & DICE & $0.244\phantom{{}\pm 0.000}$ & \tbdnum & \tbdnum \\
\grouprule
\multirow{3}{=}{Transport \& bridges} & Vanilla-SB & $0.214\phantom{{}\pm 0.000}$ & \tbdnum & \tbdnum \\
 & OT-CFM & $0.204\phantom{{}\pm 0.000}$ & \tbdnum & \tbdnum \\
 & OT-MFM & $0.071\phantom{{}\pm 0.000}$ & \tbdnum & \tbdnum \\
\grouprule
\multirow{3}{=}{Reference process} & Curly-FM & $0.030\phantom{{}\pm 0.000}$ & \tbdnum & \tbdnum \\
 & SBIRR & $0.076\phantom{{}\pm 0.000}$ & $0.567 \pm 0.014$ & $1.062 \pm 0.111$ \\
 & Snap-MMD & $0.076\phantom{{}\pm 0.000}$ & $0.473 \pm 0.013$ & $0.896 \pm 0.141$ \\
\grouprule
\multirow{2}{=}{Simulation-free WGF} & Stitching, $V(x)$ & $4.750 \pm 0.535$ & $0.205 \pm 0.006$ & $0.149 \pm 0.007$ \\
 & Stitching, $V(x,t)$ & $0.104 \pm 0.020$ & $0.170 \pm 0.004$ & $0.211 \pm 0.092$ \\
\addlinespace[5pt]
Simulation-based WLM & WLM & $0.064 \pm 0.015$ & $0.088 \pm 0.023$ & $0.194 \pm 0.048$ \\
\addlinespace[5pt]
\multirow{2}{=}{Simulation-free WLM} & Double-Stitch, $V(x)$ & $0.065 \pm 0.004$ & $0.134 \pm 0.003$ & $1.839 \pm 0.539$ \\
 & Double-Stitch, $V(x,t)$ & $0.073 \pm 0.009$ & $0.119 \pm 0.019$ & $0.222 \pm 0.008$ \\
\bottomrule
\end{tabular}
}
\end{table}

    We present the results in \cref{tab:oceans-comparison}, where the results for flow-based and reference process methods are adapted from \citet[Table~2]{guan2026wlm} (see the visualizations and per-snapshot performance in \cref{app:oceans-results}). On interpolation, static Double-Stitch matches our WLM rerun and outperforms all other methods except Curly-FM, which relies on reference process. On forecasting, WLM performs best under pushforward, while Double-Stitch still improves on Stitching and reference process methods. Under rollout, static Double-Stitch performs poorly and time-dependent potential brings its error close to WLM, although static Stitching is the best here. Note that at $t=10$ the time-dependent potential is evaluated beyond the last observed snapshot, which assumes that its learned time dependence extrapolates.

\subsection{Computational cost}
\label{sec:computational-cost}

    Finally, we investigate the computational cost of our simulation-free algorithm and compare it to the simulation-based WLM. 
    
    First, we measure the practical training time on EB and Oceans (see \cref{tab:actual-cost}). Double-Stitch trains in roughly $4$--$12$ minutes on EB and $7$--$10$ minutes on oceans, while WLM takes $16$--$54$ and $97$--$145$ minutes on identical hardware (see the details in \cref{app:training-details}).

    Second, we match the parameters of the two methods (number of updates, time steps and data) and run them on the synthetic WHF Gaussian mixture flow from \cref{fig:gaussian-mixture}. We find that WLM takes almost $7.6$ times as long ($225$ vs $30$ seconds, \cref{fig:cost-efficiency}) to reach the same level of performance (test $W_1$ of about $0.09$).
    
    This illustrates the computational advantage of simulation-free training in this setting.

    \newsavebox{\costTablePanel}
\newsavebox{\costFigureCaption}
\newlength{\costPlotHeight}
\begin{figure}[!htb]
    \centering
    \sbox{\costTablePanel}{\begin{minipage}[t]{0.46\linewidth}
        \vspace{0pt}
        \input{tables/actual-cost}
    \end{minipage}}
    \sbox{\costFigureCaption}{\begin{minipage}[t]{0.515\linewidth}
        \vspace{0pt}
        \caption{\textbf{Training on synthetic WHF} (mean \textpm{} std, 3 seeds). Static methods. Arrow shows the training time ratio at test $W_1\approx0.09$.}
        \label{fig:cost-efficiency}
    \end{minipage}}
    \setlength{\costPlotHeight}{\dimexpr\ht\costTablePanel+\dp\costTablePanel-\ht\costFigureCaption-\dp\costFigureCaption+\baselineskip\relax}
    \typeout{Cost plot height: \the\costPlotHeight}
    \usebox{\costTablePanel}\hfill
    \begin{minipage}[t]{0.515\linewidth}
        \vspace{0pt}
        \centering
        \includegraphics[width=\linewidth,height=\costPlotHeight,keepaspectratio]{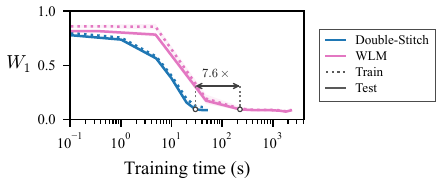}\par\nointerlineskip
        \usebox{\costFigureCaption}
    \end{minipage}
\end{figure}

\section{Related work}
\label{sec:related-work}
    
    \paragraph{Flow-based methods} Flow matching \citep{lipman2023,tong2024}, dynamic optimal transport \citep{tong2020}, and Schr\"odinger bridges \citep{vargas2021,chen2022,koshizuka2023,chen2023} learn a time-dependent velocity field or stochastic bridge that transports one snapshot into the next according to some optimality criterion (usually the minimal kinetic energy trajectory). Action matching \citep{neklyudov2023} fits the potential jointly to all marginals without a coupling; \citet{neklyudov2024} generalize the interpolant to any user-specified Lagrangian. As described above, without additional assumptions, these methods are generally not suitable for extrapolation beyond the last observed snapshot \citep{berlinghieri2025}. However, they can be extended to handle non-gradient flows by adding a reference process \citep{petrovic2025curlyfm,shen2025}. 
    
    \paragraph{Gradient flow methods} The JKOnet family of methods imposes WGF via a Jordan--Kinderlehrer--Otto (JKO) scheme, which reduces the problem to learning the potential function $\mathcal{F}$. Various methods offer different ways of learning it \citep{bunne2022,amos2017,persiianov2025}, but they all need to unroll the JKO scheme to get the population trajectory. Alternatively, WGF can be softly imposed via a Wasserstein residual, similar to physics-informed collocation losses \citep{raissi2019}. Stitching \citep{heinonen2026} directly optimizes the trajectory and learns the potential to fit the observed snapshots and minimize the residual. GenWGP \citep{liu2026} uses the same residual for a known energy but integrates a Neural ODE.
    
    \paragraph{Lagrangian flow methods} \citet{neklyudov2024} use a given Lagrangian in an action-matching framework. \citet{guan2026wlm} use a CK Lagrangian in Wasserstein space with Lagrange multipliers and propose a simulation-based algorithm for learning its energy and friction. We derive the same dynamical law using an alternative approach proposed in geometric hydrodynamics, specifically the Clebsch variational formulation \citep{cotter2007clebsch} of Euler--Arnold and Euler--Poincar\'e equations \citep{holm1998,khesin2021}.

    \paragraph{Residuals and numerics} Penalizing the residual of a governing equation has long been used in physics-informed neural networks \citep{raissi2019,karniadakis2021}. Their optimization is known to suffer from competing loss terms and stiff dynamics \citep{krishnapriyan2021,wang2021pinn}. Related works have enforced some constraints like conservation laws by construction \citep{richterpowell2022}. Our method uses both these approaches: dynamics are imposed softly via a residual while continuity is enforced by construction. We find that numerical methods offer useful language to compare methods for learning population dynamics: Stitching and Double-Stitch are collocation methods (directly optimize the trajectory), JKOnet and WLM are shooting methods (simulate the trajectory forward), and multiple shooting isn't represented in the field \citep{bock1984}.

\section{Discussion}

    \paragraph{Summary} We present Double-Stitch, a simulation-free residual-based method for learning population dynamics. We derive a Lagrangian dynamical law for general (non-gradient) velocities, which leads to a Wasserstein residual allowing learning gradient, conservative, and damped dynamics. We show that our method matches or outperforms Stitching and WLM on single-cell data and ocean interpolation while training $4$--$14$ times faster than WLM, although WLM remains better on ocean forecasting.
    
    \paragraph{Limitations} As a residual-based method and unlike simulation-based methods, Double-Stitch does not guarantee exact satisfaction of the assumed dynamical law. For the same reason, its optimization may be challenging if the initialization is poor or the residuals compete in a complex manner. %

    \paragraph{Future work} While we focus on the Caldirola--Kanai Lagrangian formulation, future work could investigate alternative Lagrangians (e.g., with Rayleigh dissipation functions) and consider more general mechanics such as active matter and viscous flows. Another line of research could develop hybrid methods that combine the advantages of Double-Stitch and WLM via multiple-shooting.

\subsection*{AI use statement}

    In this work, we used generative AI tools for creating and editing software code (including implementation of methods, plotting, table extraction, and references formatting), proposing keywords, editing the manuscript to improve readability (including drafting some paragraphs from raw notes), assistance in the writing of proofs and mathematical claims (specifically unifying notation, transcribing and adapting equations, verifying proof correctness), generating synthetic datasets (specifically finding the right parameter settings to make them visual and consistent). We have not used generative AI tools to clean and reformat datasets, discover research topics or identify gaps, assist with translation, suggest a structure for a research paper and support qualitative and thematic data analysis. Tasks such as formulating questions for surveys or interviews, creating artifacts, and transcribing recordings of research material are not applicable to this work. We have reviewed all AI-assisted work (derivations were checked manually and generated code was verified and tested). We take responsibility for the final content of this work, including text, claims, or artifacts produced with the aid of generative AI.

\subsection*{Ethics statement}

    This work focuses on machine learning research using publicly available datasets with applications to computational biology and physical sciences. We are not aware of ethical concerns beyond the general ones common to all statistical and data science research.

\bibliography{references}
\bibliographystyle{iclr2027_conference}

\newpage
\appendix

\appendixcontents

\begingroup
\raggedbottom
\section{Notation}
\label{app:notation}

\begin{table}[H]
\centering
\caption{Notation}
\label{tab:notation}
\tablesize
\renewcommand{\arraystretch}{0.87}
\begin{tabularx}{\linewidth}{@{}lX@{}}
\toprule
Symbol & Meaning \\
\midrule
\multicolumn{2}{@{}l}{\emph{Indices}} \\
$i$ & Snapshot, $i=0,\dots,K$ \\
$n$ & Sample within a snapshot, $n=1,\dots,N_i$ \\
$k,l$ & Learned centers, $k,l=1,\dots,P$ \\
$j$ & Time step, $j=0,\dots,M$ \\
\grouprule\addlinespace[2pt]
\multicolumn{2}{@{}l}{\emph{Problem setting}} \\
$\Ptwo$ & Probability measures with finite second moment \\
$\rho_t$ & Population distribution at time $t$ \\
$X_i$ & Observed snapshot at time $t_i$ \\
$N_i$ & Number of observations in snapshot $i$ ($N$ if equal) \\
$d$ & Dimension of the state space \\
$\mathcal T_{\mathrm{obs}}$ & Observation times used for fitting \\
$\mathcal T_{\mathrm{eval}}$ & Times used for evaluation \\
$\hat\rho_t$ & Predicted population distribution \\
$W_p$ & $p$-Wasserstein distance \\
\grouprule\addlinespace[2pt]
\multicolumn{2}{@{}l}{\emph{Dynamics and Clebsch variables}} \\
$u_t$ & General population velocity \\
$v_t$ & Canonical velocity with minimum population kinetic energy \\
$w_t$ & Density-preserving component of the velocity \\
$x_t$ & Particle position at time $t$ \\
$\partial_t f_t(x)$ & Time derivative of a field at fixed position \\
$\mathrm d f_t(x_t)/\mathrm dt$ & Total derivative of a field along a particle trajectory \\
$\mathcal F$ & Free energy of a Wasserstein gradient flow \\
$\mathcal U$ & Potential energy of the population \\
$\mathcal K$ & Kinetic energy of the population \\
$\grad(\delta\mathcal U/\delta\rho_t)$ & Wasserstein gradient of the potential energy \\
$\gamma$ & Linear friction coefficient \\
$\mathcal A$ & Caldirola--Kanai action \\
$\ell_t$ & Material label field \\
$\phi_t$ & Continuity multiplier \\
$\mu_t$ & Clebsch multiplier for label transport \\
\grouprule\addlinespace[2pt]
\multicolumn{2}{@{}l}{\emph{Double-Stitch parameterization}} \\
$\theta$ & Learned parameters \\
$\hat\rho_t^\theta$ & Learned kernel density estimate (KDE) \\
$u_t^\theta$ & Responsibility-weighted mean of the center velocities \\
$x_{t,k}^\theta$ & Position of learned center $k$ at time $t$ \\
$w_k^\theta$ & Time-independent mixture weight of center $k$ \\
$P$ & Number of particles (learned centers in Double-Stitch, simulated particles in WLM) \\
$\phi$ & Smoothing kernel \\
$\sigma$ & Kernel bandwidth \\
$r_{t,k}^\theta(x)$ & Responsibility of center $k$ at position $x$ \\
$\rho_t^P$ & Weighted empirical measure of the centers \\
$V_\theta$ & Learned external potential \\
$W_\theta$ & Learned interaction potential \\
$c_V$ & External-potential coefficient \\
$c_H$ & Entropy coefficient \\
$c_W$ & Interaction coefficient \\
$M$ & Number of time steps (grid intervals or solver steps) \\
$h$ & Time-grid spacing \\
$\Delta_h$ & First-difference operator (central; one-sided at the endpoints) \\
$\Delta_h^2$ & Second-difference operator (central; one-sided at the endpoints) \\
\grouprule\addlinespace[2pt]
\multicolumn{2}{@{}l}{\emph{Objectives and computational cost}} \\
$\mathcal L$ & Training objective; mechanical Lagrangian in the action derivation \\
$\mathcal L_{\mathrm{data}}$ & Data loss \\
$\mathcal L_{\mathrm{res}}$ & Discretized residual loss \\
$\lambda$ & Residual-loss weight \\
$\mathcal D$ & Data divergence \\
$\mathcal R$ & Population Lagrangian residual \\
$C(P)$ & Time of one force evaluation on $P$ particles, including backpropagation \\
$B$ & Number of force evaluations the hardware can run at once \\
\bottomrule
\end{tabularx}
\end{table}

\clearpage
\endgroup

\section{Derivation of the equations of motion}
\label{app:proofs}

    We consider a flow of probability measures $\rho_t\in\Ptwo$ with density and velocity field $u_t$ smooth in space and time, satisfying the continuity equation (\cref{eq:continuity}):
    \begin{equation*}
        \partial_t\rho_t+\dive(\rho_tu_t)=0.
    \end{equation*}

    The classical Lagrangian $\mathcal L=\mathcal K-\mathcal U$ with kinetic energy
    \begin{equation*}
        \mathcal K=\frac12\int\rho_t(x)\norm{u_t(x)}^2\dd x
    \end{equation*}
    and a differentiable potential energy $\mathcal U$ with a smooth first variation describe Hamiltonian dynamics. For canonical velocities, these are Wasserstein Hamiltonian flows (WHF) \citep{chow2020whf}. Total energy is conserved when $\mathcal U$ has no explicit time dependence. The corresponding action is
    \begin{equation*}
        \mathcal A=\int_0^T\big(\mathcal K-\mathcal U\big)\dd t.
    \end{equation*}

    To describe dissipative flows as well, we adopt the Caldirola--Kanai Lagrangian $\mathcal L=\mathrm{e}^{\gamma t}(\mathcal K-\mathcal U)$, where $\gamma\ge0$ is a constant friction coefficient \citep{caldirola1941,kanai1948}. This leads to a damped action
    \begin{equation*}
        \mathcal A=\int_0^T\mathrm{e}^{\gamma t}\big(\mathcal K-\mathcal U\big)\dd t.
    \end{equation*}
    For a time-independent energy, WGF is recovered in the overdamped limit after rescaling time (\cref{app:recovered-flows}).

    Using this action and the continuity constraint, we derive the equations of motion that can be used by a trajectory simulator (for simulation-based methods) or in a residual loss (for simulation-free methods).

\subsection{Lagrange multipliers}

    One common approach to derive the equations of motion is to enforce continuity within the action via a Lagrange multiplier \citep{guan2026wlm}. We write the velocity here as $v_t$ for reasons that will become clear soon.

    Denote the scalar Lagrange multiplier by $\phi_t$ and the spatial inner product by $\langle f,g\rangle=\int f(x)\cdot g(x)\dd x$. Then the constrained action is
    \begin{equation*}
        \mathcal A=\int_0^T\mathrm{e}^{\gamma t}\Big[\mathcal K-\mathcal U+\underbrace{\langle\phi_t,\partial_t\rho_t+\dive(\rho_tv_t)\rangle}_{\text{continuity}}\Big]\dd t.
    \end{equation*}

    Following the principle of stationary action, we set its first variations to zero and integrate by parts:
    \begin{align*}
        \frac{\delta\mathcal A}{\delta\rho_t}&=0
            \implies (\partial_t+v_t\cdot\grad)\phi_t=\frac12\norm{v_t}^2-\fdv{\mathcal U}{\rho_t}-\gamma\phi_t,
            &&\text{(potential evolution)}\\
        \frac{\delta\mathcal A}{\delta\phi_t}&=0
            \implies \partial_t\rho_t+\dive(\rho_tv_t)=0,
            &&\text{(continuity)}\\
        \frac{\delta\mathcal A}{\delta v_t}&=0
            \implies v_t=\grad\phi_t.
            &&\text{(gradient velocity)}
    \end{align*}
    Here $\partial_t$ differentiates a field at a fixed spatial position. Along a moving particle with $\mathrm{d}x_t/\mathrm{d}t=v_t(x_t)$, the ordinary time derivative gives the material derivative by the chain rule,
    \begin{equation*}
        \frac{\mathrm{d}}{\mathrm{d}t}f_t(x_t)
        =\big[\partial_t f_t+(v_t\cdot\grad)f_t\big](x_t).
    \end{equation*} %

    The first equation, together with the third, gives the damped Hamilton--Jacobi equation and recovers the equations of motion in \citet[Equations~(6)--(8)]{guan2026wlm}. In Lagrangian coordinates, with $\frac{\mathrm{d}x_t}{\mathrm{d}t}=v_t(x_t)$, it can be written as
    \begin{equation*}
        \frac{\mathrm{d}^2x_t}{\mathrm{d}t^2}=-\grad\fdv{\mathcal U}{\rho_t}(x_t)-\gamma\frac{\mathrm{d}x_t}{\mathrm{d}t}.
    \end{equation*}

    The second equation is the continuity condition that we wished to impose.

    The third equation restricts the velocity to be a gradient or in other words to be irrotational. This means $v_t$ is the canonical velocity of \citet[Chapter~8]{ambrosio2008} with the smallest kinetic energy among velocity fields producing the same instantaneous change in density (and this is the reason we used $v_t$ instead of $u_t$ to denote it). 

    However, although the rotational component of the velocity ($w_t$ in the Helmholtz decomposition \cref{eq:helmholtz-decomp}) does not affect the density instantaneously, it can still affect the motion of the observed particles and therefore density evolution. We discuss this effect in more detail in \cref{app:velocity-comparison} after introducing the alternative derivation.

\subsection{Clebsch action principle}
\label{app:clebsch-derivation}

    To retain the rotational component of the velocity in the equations of motion, we use the Clebsch action principle \citep{cotter2007clebsch,holm1998}. Here we use the general velocity $u_t$, with particle trajectories satisfying $\mathrm{d}x_t/\mathrm{d}t=u_t(x_t)$.

    Specifically, define a material label field $\ell_t(x)\in\R^d$ that records the initial position of the particle currently at $x$. A particle keeps its identity during the flow, so $\ell_t(x_t)=x_0$ and $\ell_0(x)=x$. This gives the following constraint
    \begin{equation*}
        \partial_t\ell_t+(u_t\cdot\grad)\ell_t=0.
    \end{equation*}

    We adjoin this constraint with a vector multiplier $\rho_t\mu_t$ to obtain the action
    \begin{equation}
        \mathcal A=\int_0^T\mathrm{e}^{\gamma t}\Big[\mathcal K-\mathcal U+\underbrace{\langle\phi_t,\partial_t\rho_t+\dive(\rho_tu_t)\rangle}_{\text{continuity}}-\underbrace{\langle\rho_t\mu_t,(\partial_t+u_t\cdot\grad)\ell_t\rangle}_{\text{label transport}}\Big]\dd t.
        \label{eq:clebsch-action}
    \end{equation}

    \begin{theoremrestate}[Lagrangian equations of motion]
        Let $\rho_t\in\Ptwo$ be a strictly positive density and $u_t$ a velocity field, both smooth in space and time on $[0,T]$ and satisfying continuity. Suppose the particle flow defined by $\mathrm{d}x_t/\mathrm{d}t=u_t(x_t)$ has a smooth inverse. Fix a constant $\gamma\ge0$ and a differentiable energy $\mathcal U$ with a smooth first variation. Assume finite kinetic energy and action, and sufficient integrability to differentiate under the integrals. Variations are smooth and compactly supported in space, with density and particle labels fixed at the temporal endpoints.

        If the Clebsch action in \cref{eq:clebsch-action} is stationary with respect to $\rho_t,u_t$ and smooth labels and multipliers $\ell_t,\phi_t,\mu_t$, then
        \begin{equation*}
            \partial_tu_t+(u_t\cdot\grad)u_t
            =-\gamma u_t-\grad\fdv{\mathcal U}{\rho_t}.
        \end{equation*}
        The velocity $u_t$ need not be a gradient.
    \end{theoremrestate}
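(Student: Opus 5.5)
The plan is to compute the first variations of the Clebsch action \cref{eq:clebsch-action} with respect to $u_t$, $\rho_t$ and $\ell_t$. Each is simplified by integrating by parts in space (boundary terms vanish for compactly supported variations) and in time (endpoint terms vanish because density and labels are fixed at $t=0,T$). The velocity is then eliminated from the three stationarity conditions. Throughout, let $D_t=\partial_t+u_t\cdot\grad$ denote the material derivative, and write $(\grad\ell_t)^\top\mu_t$ for the vector with components $\sum_a\mu^a\grad\ell^a$. The factor $\mathrm e^{\gamma t}$ is kept inside the time integrals; on integrating by parts in time it produces the $\gamma$ terms.

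\textbf{Step 1: variation in $u_t$.} Integrating the continuity term by parts gives $-\rho_t\grad\phi_t\cdot\delta u$. The label term contributes $-\rho_t(\grad\ell_t)^\top\mu_t\cdot\delta u$, and the kinetic term contributes $\rho_t u_t\cdot\delta u$. Since $\rho_t>0$, stationarity yields the Clebsch representation
\[
u_t=\grad\phi_t+(\grad\ell_t)^\top\mu_t .
\]
The second term is generally not a gradient, which is why $u_t$ need not be one.

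\textbf{Step 2: variations in $\rho_t$ and $\ell_t$.} Varying $\rho_t$ and integrating $\mathrm e^{\gamma t}\phi_t\partial_t\delta\rho$ by parts in time gives
\[
\tfrac12\norm{u_t}^2-\fdv{\mathcal U}{\rho_t}-(\partial_t+\gamma)\phi_t-u_t\cdot\grad\phi_t-\mu_t\cdot D_t\ell_t=0 .
\]
The last term vanishes by the label constraint obtained from varying $\mu_t$. Varying $\ell_t$ gives $\partial_t(\mathrm e^{\gamma t}\rho_t\mu_t)+\dive(\mathrm e^{\gamma t}\rho_t\mu_t\otimes u_t)=0$. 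Combined with continuity (from varying $\phi_t$), this reduces to $D_t\mu_t=-\gamma\mu_t$. Together with $D_t\ell_t=0$ we therefore have the evolution of all Clebsch variables.

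\textbf{Step 3: assembling the equation.} Apply $D_t$ to the representation from Step 1, using the commutator identity $D_t\grad f=\grad D_t f-(\grad u_t)^\top\grad f$, valid for scalars $f$. For the label part, $D_t\grad\ell^a=-(\grad u_t)^\top\grad\ell^a$ because $D_t\ell^a=0$, and $D_t\mu^a=-\gamma\mu^a$. For the potential part, Step 2 gives $D_t\phi_t=\tfrac12\norm{u_t}^2-\fdv{\mathcal U}{\rho_t}-\gamma\phi_t$. Summing these contributions yields
\[
D_tu_t=\grad\!\big(\tfrac12\norm{u_t}^2\big)-\grad\fdv{\mathcal U}{\rho_t}-\gamma u_t-(\grad u_t)^\top u_t .
\]
Since $\grad(\tfrac12\norm{u}^2)=(\grad u)^\top u$, these two terms cancel, and we obtain exactly \cref{eq:general-motion-law}.

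The main obstacle is Step 3: the transpose-gradient terms coming from the label and $\phi$ parts must combine into $(\grad u_t)^\top u_t$ and cancel against $\grad\tfrac12\norm{u_t}^2$, and this needs careful index bookkeeping. A secondary point is handling the $\mathrm e^{\gamma t}$ factor consistently in Step 2, so that the damping appears as $-\gamma\phi_t$ and $-\gamma\mu_t$ with the same sign. The smooth inverse of the flow ensures that the labels are a valid coordinate field, so the Clebsch variables are well defined.
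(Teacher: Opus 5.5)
Your proposal is correct and follows the paper's proof essentially step for step. You derive the same five stationarity conditions (continuity, Clebsch velocity $u_t=\grad\phi_t+(\grad\ell_t)^\top\mu_t$, potential evolution, label transport, and $D_t\mu_t=-\gamma\mu_t$), then differentiate the Clebsch velocity along particles using $D_t\grad f=\grad D_t f-(\grad u_t)^\top\grad f$ and cancel $(\grad u_t)^\top u_t$ against $\grad\tfrac12\norm{u_t}^2$, with the added explicitness of noting that the $-\mu_t\cdot D_t\ell_t$ term in the $\rho_t$-variation drops by label transport and that the $\ell_t$-variation yields the multiplier evolution only after using continuity and $\rho_t>0$, both of which the paper leaves implicit.
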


    \begin{proof}
    Similarly to the previous derivation, we set the variations to zero for stationary action:
    \begin{equation}
        \begin{aligned}
        \frac{\delta\mathcal A}{\delta\rho_t}&=0
            \implies (\partial_t+u_t\cdot\grad)\phi_t=\frac12\norm{u_t}^2-\fdv{\mathcal U}{\rho_t}-\gamma\phi_t,
            &&\text{(potential evolution)}\\
        \frac{\delta\mathcal A}{\delta\phi_t}&=0
            \implies \partial_t\rho_t+\dive(\rho_tu_t)=0,
            &&\text{(continuity)}\\
        \frac{\delta\mathcal A}{\delta u_t}&=0
            \implies u_t=\grad\phi_t+(\grad\ell_t)^\top\mu_t,
            &&\text{(Clebsch velocity)}\\
        \frac{\delta\mathcal A}{\delta\mu_t}&=0
            \implies (\partial_t+u_t\cdot\grad)\ell_t=0,
            &&\text{(label transport)}\\
        \frac{\delta\mathcal A}{\delta\ell_t}&=0
            \implies (\partial_t+u_t\cdot\grad)\mu_t=-\gamma\mu_t,
            &&\text{(multiplier evolution)}
    \end{aligned}
        \label{eq:clebsch-stationarity}
    \end{equation}

    The first two equations have the same form as before but now with the general velocity $u_t$.

    The third equation adds the label term $(\grad\ell_t)^\top\mu_t$, which permits nonzero vorticity. Here $\grad\phi_t$ is not necessarily the canonical velocity $v_t$ because the label term does not have to preserve density.

    The fourth equation shows that particle keeps its initial label.

    The fifth equation shows that the multiplier is transported with the particles and decays at $\gamma$ rate because of the exponent in the action.

    We use $(\grad\ell_t)_{ab}=\partial_b\ell_{t,a}$, with the same Jacobian convention for $u_t$. The equation of motion then follows from differentiating the Clebsch velocity along a particle. For a scalar field $f_t$, the chain rule gives
    \begin{equation*}
        \frac{\mathrm{d}}{\mathrm{d}t}\big[\grad f_t(x_t)\big]
        =\Big[\grad\big(\partial_t f_t+(u_t\cdot\grad)f_t\big)
        -(\grad u_t)^\top\grad f_t\Big](x_t).
    \end{equation*}
    We apply this to $\phi_t$ and each label component and substitute the stationarity conditions:
    \begin{align}
        \frac{\mathrm{d}}{\mathrm{d}t}u_t(x_t)
        &=\Big[\grad\left(\frac12\norm{u_t}^2-\fdv{\mathcal U}{\rho_t}\right)
            -(\grad u_t)^\top u_t-\gamma u_t\Big](x_t)\notag\\
        &=-\grad\fdv{\mathcal U}{\rho_t}(x_t)-\gamma u_t(x_t),
        \label{eq:clebsch-law}
    \end{align}
    where we use $(\grad u_t)^\top u_t=\frac{1}{2}\grad \left(\norm{u_t}^2\right)$. In Lagrangian coordinates, with $\frac{\mathrm{d}x_t}{\mathrm{d}t}=u_t(x_t)$, this becomes
    \begin{equation*}
        \frac{\mathrm{d}^2x_t}{\mathrm{d}t^2}=-\grad\fdv{\mathcal U}{\rho_t}(x_t)-\gamma\frac{\mathrm{d}x_t}{\mathrm{d}t}.
    \end{equation*}
    This is the same particle equation as before, now allowing a general initial velocity.

    \end{proof}

\subsection{Recovered flows}
\label{app:recovered-flows}

    The dynamical law in \cref{eq:clebsch-law} describes damped flow with a general initial velocity. We now show how it recovers the canonical dynamics of WLM, conservative dynamics, Wasserstein Hamiltonian flow, and Wasserstein gradient flow.

\paragraph{Canonical dynamics (WLM)}
    If the initial velocity is a gradient: $u_0 = v_0 = \grad\phi_0$. Choosing $\mu_0=0$ and using the multiplier evolution in \cref{eq:clebsch-stationarity}, we get $\mu_t=0$ throughout the smooth flow. The velocity then remains canonical: $u_t=v_t=\grad\phi_t$. The dynamical law becomes
    \begin{equation*}
        \partial_t\phi_t+\frac12\norm{\grad\phi_t}^2+\fdv{\mathcal U}{\rho_t}+\gamma\phi_t=0.
    \end{equation*}
    Together with continuity, this recovers the Hamiltonian equations of motion of \citet[Theorem~2.2]{guan2026wlm} and proves \cref{cor:caldirola-kanai}.

\paragraph{Conservative dynamics}
    With $\gamma=0$ and no explicit time dependence in $\mathcal U$ (otherwise its time derivative contributes to the energy balance as well), the flow conserves total energy, including for general velocities. Differentiating the potential energy along the flow and using continuity gives
    \begin{equation*}
        \frac{\dd}{\dd t}\mathcal U=\int\rho_tu_t\cdot\grad\fdv{\mathcal U}{\rho_t}\dd x.
    \end{equation*}
    Similarly, differentiating the full energy and substituting the dynamical law gives
    \begin{equation*}
        \frac{\dd}{\dd t}\big(\mathcal K+\mathcal U\big)
        =\int\rho_tu_t\cdot\left(\partial_tu_t+(u_t\cdot\grad)u_t+\grad\fdv{\mathcal U}{\rho_t}\right)\dd x
        =-\gamma\int\norm{u_t}^2\rho_t\dd x.
    \end{equation*}
    Therefore the total energy is conserved for $\gamma=0$ and dissipated for $\gamma>0$.

\paragraph{Wasserstein Hamiltonian flow}
    Setting $\gamma=0$ in the canonical dynamics gives Wasserstein Hamiltonian flow \citep{ambrosiogangbo2008,chow2020whf}. Here $\rho_t$ and $\phi_t$ satisfy continuity and the undamped Hamilton--Jacobi equation. For a time-independent potential energy, the Hamiltonian (total energy) $\mathcal K+\mathcal U$ is conserved (see above).

\paragraph{Wasserstein gradient flow}
    For a time-independent energy $\mathcal U=\F$, the particle form of \cref{eq:clebsch-law} has the same form as the one used by \citet[Proposition~2.4]{guan2026wlm} to recover WGF in the overdamped limit. Explicitly, in slow time $\tau=t/\gamma$, writing $\widetilde x_\tau=x_{\gamma\tau}$ and $\widetilde\rho_\tau=\rho_{\gamma\tau}$ gives
    \begin{equation*}
        \frac{1}{\gamma^{2}}\frac{\mathrm{d}^2\widetilde x_\tau}{\mathrm{d}\tau^2}+\frac{\mathrm{d}\widetilde x_\tau}{\mathrm{d}\tau}
        =-\grad\fdv{\F}{\widetilde\rho_\tau}(\widetilde x_\tau).
    \end{equation*}
    Formally dropping the inertial term as $\gamma\to\infty$ yields $\mathrm{d}\widetilde x_\tau/\mathrm{d}\tau=-\grad\fdv{\F}{\widetilde\rho_\tau}(\widetilde x_\tau)$ and hence the WGF law \cref{eq:wgf}.

\subsection{General and canonical velocities}
\label{app:velocity-comparison}

    There may be multiple velocity fields that produce the same instantaneous change in density. The canonical velocity $v_t$ has the minimum kinetic energy among such fields \citep{otto2001,ambrosio2008}. 
    
    However, the rotational part of the velocity field can still lead to later changes in the density that the canonical velocity would not be able to capture. For example, consider an elliptical (anisotropic two-dimensional) Gaussian distribution rotating around its center under a harmonic force at the rotation frequency (that would keep the particles together). Its general velocity would simply rotate the distribution. Its canonical velocity would produce the same initial change in density, but canonical velocity would stretch the ellipse along one diagonal and compress it along the other over longer time.

    \paragraph{Relation to WLM}
    \label{app:wlm-velocity-scope}
    \citet[Theorem~2.2 and Appendix~A]{guan2026wlm} derive the dynamical law for canonical velocities ($v_0=\grad\phi_0$ in our notation). Their simulation-based algorithm can use the initial velocities directly, without imposing an additional gradient condition \citep[Algorithms~1 and~2]{guan2026wlm}. Therefore, if the initial velocity is canonical, WLM would preserve it for later time steps. However, if the velocity is general, the overall trajectory would not be canonical anymore. The Clebsch derivation covers smooth general initial velocities, so it allows for the same particle law but with the velocities that can include circulation.

\section{Derivation of Double-Stitch}
\label{app:discretization}

\subsection{Velocity and acceleration}
\label{app:kde-velocity}

    Consider time-independent weights and a fixed kernel $\phi$. We differentiate the KDE path \cref{eq:kde-path} and use $\partial_t\phi(x-x_{t,k})=-\dot x_{t,k}\!\cdot\grad\phi(x-x_{t,k})$
    \begin{equation*}
        \partial_t\hat\rho_t(x)
        =-\sum_{k=1}^P w_k\,\dot x_{t,k}\!\cdot\grad\phi(x-x_{t,k})
        =-\dive\Big(\sum_{k=1}^P w_k\,\phi(x-x_{t,k})\,\dot x_{t,k}\Big)
        =-\dive\big(\hat\rho_t(x)\,u_t(x)\big),
    \end{equation*}
    so the KDE path satisfies the continuity equation with the velocity field
    \begin{equation*}
        u_t(x)=\sum_{k=1}^P r_{t,k}(x)\,\dot x_{t,k},
        \qquad
        r_{t,k}(x)=\frac{w_k\,\phi(x-x_{t,k})}{\hat\rho_t(x)},
    \end{equation*}
    which is \cref{eq:kde-velocity} (the responsibility-weighted mean of the center velocities). Then its material derivative is
    \begin{align*}
        \partial_tu_t(x)+(u_t\!\cdot\grad)u_t(x)
        &=\sum_{k=1}^P r_{t,k}(x)\,\ddot x_{t,k}
          -\frac{1}{\hat\rho_t(x)}\dive\!\big(\hat\rho_t(x)\,\Sigma_t(x)\big),
        \\
        \Sigma_t(x)&=\sum_{k=1}^P r_{t,k}(x)\,\big(\dot x_{t,k}-u_t(x)\big)\big(\dot x_{t,k}-u_t(x)\big)^{\!\top},
    \end{align*}
    where $\Sigma_t$ is the dispersion of the center velocities about their local mean.

\subsection{Energy variation and Wasserstein gradient}
\label{app:wasserstein-gradient}
    
    Consider a symmetric interaction kernel $W_\theta(z)=W_\theta(-z)$ (e.g., a radial interaction). We compute the first variation of the energy \cref{eq:stitching-functional}:
    \begin{equation*}
        \frac{\delta\mathcal U_\theta}{\delta\rho}(x) =c_VV_\theta(x)+c_H\bigl(\log\rho(x)+1\bigr) +c_W(W_\theta*\rho)(x),
    \end{equation*}
    For an asymmetric kernel, this energy depends only on its even part, so we replace $W_\theta$ in the first variation with $\frac{1}{2}(W_\theta(z)+W_\theta(-z))$. For the symmetric case, the analytic Wasserstein gradient is
    \begin{equation}
        \nabla_x\frac{\delta\mathcal U_\theta}{\delta\rho}(x) =c_V\nabla V_\theta(x)+c_H\nabla\log\rho(x) +c_W\nabla(W_\theta*\rho)(x).
        \label{eq:stitching-functional-gradient}
    \end{equation}
    We then evaluate the entropy term on the KDE path: $\grad\log\hat\rho_t$. Approximating the interaction with the empirical mass measure $\rho_t^P=\sum_k w_k\delta_{x_{t,k}}$ we get the force used in the residual
    \begin{equation}
        \grad\fdv{\mathcal U_\theta}{\rho}\big[\hat\rho_t\big](x)
        \approx c_V\grad V_\theta(x)+c_H\grad\log\hat\rho_t(x)
        +c_W\sum_{l=1}^P w_l\grad W_\theta(x-x_{t,l}).
        \label{eq:kde-wasserstein-gradient}
    \end{equation}

\subsection{Lagrangian residual and empirical approximation}
\label{app:residual-theorem}

    We define the Lagrangian residual as the integral over time and space of the law deviation from zero (see \cref{eq:general-motion-law}):
    \begin{equation}
        \mathcal R[\rho,u]=\int_0^T\!\int
        \norm*{\partial_tu_t+(u_t\cdot\grad)u_t+\gamma u_t+\grad\fdv{\mathcal U}{\rho_t}}^2
        \rho_t(x)\dd x\dd t.
        \label{eq:population-residual}
    \end{equation}
    Under the assumptions in \cref{app:clebsch-derivation}, a trajectory satisfying the equation of motion has zero residual. Conversely, a zero residual means that the nonnegative integrand vanishes almost everywhere. Then, using smoothness and strict positivity of $\rho_t$ we obtain satisfaction of the equation of motion at every point.
    
    Finally, continuity ensures that particles starting from $\rho_0$ have density $\rho_t$ at time $t$. Since $\mathrm{d}u_t(x_t)/\mathrm{d}t=\mathrm{d}^2x_t/\mathrm{d}t^2$, we can write the same residual along particle trajectories as
    \begin{equation*}
        \mathcal R[\rho,u]
        =\mathbb E_{x_0\sim\rho_0}\!\left[\int_0^T
        \norm*{\frac{\mathrm{d}^2x_t}{\mathrm{d}t^2}+\gamma\frac{\mathrm{d}x_t}{\mathrm{d}t}
        +\grad\fdv{\mathcal U}{\rho_t}(x_t)}^2\dd t\right],
    \end{equation*}
    which gives \cref{eq:residual-pullback}.
        
    We then approximate the integral using the particle representation \cref{eq:residual-pullback}. Specifically, each path has velocity $\dot x_{t,k}$ and acceleration $\ddot x_{t,k}$ so its mass measure satisfies
    \begin{equation*}
        \int g\,\dd\rho_t^P=\sum_{k=1}^P w_k\,g(x_{t,k}).
    \end{equation*}
    We can then evaluate density-dependent forces using the KDE at a finite bandwidth and approximate the interaction using the centers as in \cref{eq:kde-wasserstein-gradient}. The residual is then approximated by
    \begin{equation*}
        \mathcal R
        \approx\int_0^T\sum_{k=1}^P w_k\,
          \Big\|\,\ddot x_{t,k}+\gamma\,\dot x_{t,k}
           +\grad\fdv{\mathcal U}{\rho}\big[\hat\rho_t\big](x_{t,k})\Big\|^2\dd t.
    \end{equation*}
    
    We note that this is a particle approximation to the continuum construction, not an exact evaluation of \cref{eq:population-residual} for the KDE mean velocity.

\subsection{Time discretization}
\label{app:time-discretization}
    
    We perform finite-difference parameterization on the time grid $t_0<\dots<t_M$ with steps $h_j=t_{j+1}-t_j$. The center velocity and acceleration can be replaced by the three-point central differences at the interior nodes $j=1,\dots,M-1$,
    \begin{align*}
        \Delta_h x_{t_j,k}&=\frac{x_{t_{j+1},k}-x_{t_{j-1},k}}{h_{j-1}+h_j},
        \\
        \Delta_h^2 x_{t_j,k}&=\frac{2}{h_{j-1}+h_j}\left(\frac{x_{t_{j+1},k}-x_{t_j,k}}{h_j}-\frac{x_{t_j,k}-x_{t_{j-1},k}}{h_{j-1}}\right),
    \end{align*}
    which reduce on a uniform grid $h_j\equiv h$ to
    \begin{equation*}
        \Delta_h x_{t_j,k}=\frac{x_{t_{j+1},k}-x_{t_{j-1},k}}{2h},\qquad
        \Delta_h^2 x_{t_j,k}=\frac{x_{t_{j+1},k}-2x_{t_j,k}+x_{t_{j-1},k}}{h^2}.
    \end{equation*}
    The time integral is evaluated by the trapezoidal rule at all nodes, setting $h_{-1}=h_M=0$,
    \begin{equation*}
        \mathcal R
        \approx\sum_{j=0}^{M}\tfrac12(h_{j-1}+h_j)\sum_{k=1}^P w_k\,
          \Big\|\,\Delta_h^2x_{t_j,k}+\gamma\,\Delta_h x_{t_j,k}
           +\grad\fdv{\mathcal U}{\rho}\big[\hat\rho_{t_j}\big](x_{t_j,k})\Big\|^2,
    \end{equation*}
    which is the loss \cref{eq:particle-second-order-residual}.
    The two endpoint nodes use a one-sided stencil of the same order.

\begingroup
\raggedbottom
\subsection{Pseudocode}
\label{app:stitching-algorithm}
    
    We present the pseudocode for WLM in \cref{alg:wlm} and Double-Stitch in \cref{alg:double-stitch}. The former is adapted from the original paper to our notation \citep{guan2026wlm}.

    \begin{algorithm}[H]
  \caption{\textsc{WLM}}
  \label{alg:wlm}
  \small
  \noindent\hspace*{-1em}%
  \begin{minipage}{\dimexpr\linewidth+1em\relax}
  \begin{algorithmic}
    \Statex \textbf{Input}
    \AlgSubitem{\textbf{Data:} snapshots $\{X_i\}_{t_i\in\mathcal T_{\mathrm{obs}}}$, $\mathcal T_{\mathrm{obs}}=\{t_0<\dots<t_K\}$, represented by samples $\{x_i^{(n)}\}_{n=1}^{N}$; initial velocities $\{v_0^{(n)}\}_{n=1}^{N}$ at $t_0$}
    \AlgSubitem{\textbf{Model hyperparameters:} divergence $\mathcal D$, friction $\gamma\ge0$, and solver step size $\Delta t$}
    \AlgSubitem{\textbf{Training budget:} number of iterations $n_{\mathrm{iters}}$}

    \Statex \vspace{0.2em}
    \Statex \textbf{Initialization}
    \AlgSubitem{\textbf{Energy:} network parameters of $\mathcal U_\theta:\mathbb R^{P\times d}\to\mathbb R$}
    \AlgSubitem{\textbf{Friction:} $\gamma$ fixed, or initialized when learned}

    \Statex \vspace{0.25em}
    \Statex \textbf{Training}
    \TrainFor{$\mathrm{iter}=1,\ldots,n_{\mathrm{iters}}$}
      \State \hspace{\algorithmicindent}\textbf{Rollout}
      \State \hspace{\algorithmicindent}$H\sim\operatorname{Unif}\{1,\ldots,K\}$ \Comment{random horizon}
      \State \hspace{\algorithmicindent}$\tilde x_0^{(n)}\gets x_0^{(n)},\qquad \tilde v_0^{(n)}\gets v_0^{(n)}$ \Comment{$P$ particles from the observed initial cloud}
      \State \hspace{\algorithmicindent}\textbf{for} $j=0,\ldots,\lceil (t_H-t_0)/\Delta t\rceil-1$ \textbf{do}
      \State \hspace{\algorithmicindent}\hspace{\algorithmicindent}$a_j^{(n)}\gets-\nabla_{\tilde x^{(n)}}\mathcal U_\theta\bigl(\tilde x_j^{(1)},\ldots,\tilde x_j^{(P)}\bigr)$ \Comment{for all $n$}
      \State \hspace{\algorithmicindent}\hspace{\algorithmicindent}$\tilde x_{j+1}^{(n)},\tilde v_{j+1}^{(n)}\gets\operatorname{Leapfrog}\bigl(\tilde x_j^{(n)},\tilde v_j^{(n)},a_j^{(n)},\gamma,\Delta t\bigr)$ \Comment{damped leapfrog of \cref{eq:wlm}}
      \State \hspace{\algorithmicindent}\textbf{end for}
      \State \hspace{\algorithmicindent}$\hat X_i\gets\{\tilde x_{j_i}^{(n)}\}_{n=1}^{P}$ at the steps $j_i$ reaching $t_i$, $i=1,\ldots,H$
      \Statex \vspace{0.1em}
      \State \hspace{\algorithmicindent}\textbf{Objective and update}
      \State \hspace{\algorithmicindent}$\displaystyle \mathcal L_{\mathrm{WLM}}(\theta,\gamma)\gets\frac1H\sum_{i=1}^{H}\mathcal D(\hat X_i,X_i)$ \Comment{backpropagated through the rollout}
      \State \hspace{\algorithmicindent}$\displaystyle \operatorname{OptStep}_{\mathcal L_{\mathrm{WLM}}}\!\bigl(\mathcal U_\theta,\ \gamma\text{ if learned}\bigr)$
    \EndTrainFor
    \Statex \vspace{0.2em}
    \Statex \textbf{Output}
    \AlgSubitem{\textbf{Learned energy and friction:} $\mathcal U_\theta$, $\gamma$}
    \AlgSubitem{\textbf{Learned law:} \cref{eq:wlm}, simulated by the same integrator from any observed cloud and velocity}
  \end{algorithmic}
  \end{minipage}
\end{algorithm}

    \begin{algorithm}[H]
  \caption{\textsc{Double-Stitch}}
  \label{alg:double-stitch}
  \small
  \noindent\hspace*{-1em}%
  \begin{minipage}{\dimexpr\linewidth+1em\relax}
  \begin{algorithmic}
    \Statex \textbf{Input}
    \AlgSubitem{\textbf{Data:} snapshots $\{X_i\}_{t_i\in\mathcal T_{\mathrm{obs}}}$, represented by samples; optionally, paired initial positions and velocities $\{(x_0^{(n)},v_0^{(n)})\}_{n=1}^{N_0}$}
    \AlgSubitem{\textbf{Model hyperparameters:} number of learned particles $P$, time grid $t_0<\dots<t_M$ with spacing $h$, initial KDE bandwidth $\sigma_0$, initial friction $\gamma_0$, residual-loss weight $\lambda$, and initial-velocity weight $\lambda_{v_0}$ ($0$ without initial velocities)}
    \AlgSubitem{\textbf{Training budget:} number of epochs $n_{\mathrm{epochs}}$}

    \Statex \vspace{0.2em}
    \Statex \textbf{Initialization}
    \AlgSubitem{\textbf{Particle positions:} $\{x_{t_j,k}^\theta\}_{j,k}\gets \operatorname{InitPath}(\{X_i\}_{t_i\in\mathcal T_{\mathrm{obs}}},P,M)$}
    \AlgSubitem{\textbf{KDE:} bandwidth $\sigma^\theta\gets\sigma_0$ and uniform mixture weights $w_{1:P}^\theta\gets1/P$}
    \AlgSubitem{\textbf{Potential:} network parameters $V_\theta,W_\theta$ and functional coefficients $c_V,c_H,c_W$}
    \AlgSubitem{\textbf{Friction:} $\gamma\gets\gamma_0$}

    \Statex \vspace{0.25em}
    \Statex \textbf{Training}
    \TrainFor{$\mathrm{epoch}=1,\ldots,n_{\mathrm{epochs}}$}
      \State \hspace{\algorithmicindent}\textbf{Density and force}
      \State \hspace{\algorithmicindent}$\displaystyle \hat\rho_{t_j}^\theta(x)\gets\sum_{k=1}^{P}w_k^\theta\phi_{\sigma^\theta}(x-x_{t_j,k}^\theta)$ \Comment{\cref{eq:kde-path}}
      \State \hspace{\algorithmicindent}$\displaystyle g_{j,k}^{\theta}\gets c_V\nabla V_\theta(x_{t_j,k}^\theta)+c_H\nabla\log\hat\rho_{t_j}^\theta(x_{t_j,k}^\theta)+c_W\sum_{l=1}^{P}w_l^\theta\nabla W_\theta(x_{t_j,k}^\theta-x_{t_j,l}^\theta)$ \Comment{\cref{eq:kde-wasserstein-gradient}}
      \Statex \vspace{0.1em}
      \State \hspace{\algorithmicindent}\textbf{Acceleration residual}
      \State \hspace{\algorithmicindent}$\displaystyle \Delta_h x_{t_j,k}^\theta\gets\frac{x_{t_{j+1},k}^\theta-x_{t_{j-1},k}^\theta}{2h},\qquad \Delta_h^2 x_{t_j,k}^\theta\gets\frac{x_{t_{j+1},k}^\theta-2x_{t_j,k}^\theta+x_{t_{j-1},k}^\theta}{h^2}$ \Comment{$0<j<M$}
      \State \hspace{\algorithmicindent}$\Delta_h x_{t_j,k}^\theta,\ \Delta_h^2 x_{t_j,k}^\theta$ at $j=0,M$: one-sided second-order stencils
      \State \hspace{\algorithmicindent}$\displaystyle \mathcal L_{\mathrm{res}}(\theta,\gamma)\gets h\sum_{j=0}^{M}\sum_{k=1}^{P}w_k^\theta\left\lVert \Delta_h^2 x_{t_j,k}^\theta+\gamma\,\Delta_h x_{t_j,k}^\theta+g_{j,k}^{\theta}\right\rVert_2^2$ \Comment{endpoint terms halved}
      \Statex \vspace{0.1em}
      \State \hspace{\algorithmicindent}\textbf{Initial velocity (optional)}
      \State \hspace{\algorithmicindent}$\displaystyle \mathcal L_{v_0}(\theta)\gets\frac{1}{N_0}\sum_{n=1}^{N_0}\Bigl\lVert\sum_{k=1}^{P}r_{t_0,k}^{\theta}(x_0^{(n)})\,\frac{x_{t_1,k}^{\theta}-x_{t_0,k}^{\theta}}{h}-v_0^{(n)}\Bigr\rVert_2^2$ \Comment{\cref{eq:initial-velocity-loss}}
      \Statex \vspace{0.1em}
      \State \hspace{\algorithmicindent}\textbf{Objective and update}
      \State \hspace{\algorithmicindent}$\displaystyle \mathcal L_{\mathrm{data}}(\theta)\gets\sum_{t_i\in\mathcal T_{\mathrm{obs}}}\mathcal D(\hat\rho_{t_i}^\theta,X_i)$
      \State \hspace{\algorithmicindent}$\displaystyle \mathcal L(\theta,\gamma)\gets\mathcal L_{\mathrm{data}}(\theta)+\lambda\mathcal L_{\mathrm{res}}(\theta,\gamma)+\lambda_{v_0}\mathcal L_{v_0}(\theta)$ \Comment{\cref{eq:stitching-losses}}
      \State \hspace{\algorithmicindent}$\displaystyle \operatorname{OptStep}_{\mathcal L}\!\bigl(\{x_{t_j,k}^\theta\}_{j,k},\ \sigma^\theta,\ w_{1:P}^\theta,\ V_\theta,\ W_\theta,\ c_V,\ c_H,\ c_W,\ \gamma\bigr)$
    \EndTrainFor
    \Statex \vspace{0.2em}
    \Statex \textbf{Output}
    \AlgSubitem{\textbf{Learned particle positions:} $\{x_{t_j,k}^\theta\}_{j,k}$}
    \AlgSubitem{\textbf{Learned density:} induced curve $\hat\rho^\theta=(\hat\rho_t^\theta)_{t\in[0,T]}$; \cref{eq:kde-path}}
    \AlgSubitem{\textbf{Learned velocity:} induced field $u^\theta=(u_t^\theta)_{t\in[0,T]}$; \cref{eq:kde-velocity}}
    \AlgSubitem{\textbf{Learned potentials and friction:} energy $\mathcal U_\theta$ induced by external $V_\theta$ and interaction $W_\theta$; \cref{eq:stitching-functional}; friction $\gamma$}
  \end{algorithmic}
  \end{minipage}
\end{algorithm}

\subsection{Computational complexity}
\label{app:proof-complexity}
        
    \begin{propositionrestate}[Training cost]
        Consider the training cost of one Double-Stitch and WLM training iteration on the same task. Suppose that the number of time steps $M$, number of particles $P$, and the data loss evaluation time are the same. Assume that $C(P)$ is the time it takes to make a force evaluation ($P$ particles, including backpropagation), and suppose the hardware can run $B$ force evaluations in parallel. Then, apart from the data loss, one Double-Stitch iteration takes $O(\lceil M/B\rceil\,C(P))$ time while WLM will take $O(M\,C(P))$.
    \end{propositionrestate}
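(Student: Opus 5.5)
The plan is to read the cost of one iteration directly off the pseudocode in \cref{alg:double-stitch} and \cref{alg:wlm}, strip out the shared data-loss term, and compare the dependency structure of the remaining force evaluations. For both algorithms, every non-data operation in an iteration is either a force evaluation on $P$ particles, costing $C(P)$ including backpropagation, or a cheaper elementwise operation that is dominated by it. The elementwise operations are finite differences, leapfrog updates, squared norms, and sums over $j$ and $k$. The proof therefore reduces to counting force evaluations per iteration and determining how many of them must run sequentially.

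\textbf{Double-Stitch.} In \cref{alg:double-stitch}, the force $g_{j,k}^\theta$ at node $t_j$ depends only on the learned positions $\{x_{t_j,k}^\theta\}_k$ at that node and on the current parameters. It does not depend on any quantity computed at other nodes.

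1. The $M+1$ force evaluations are therefore mutually independent. They can be dispatched in $\lceil (M+1)/B\rceil = O(\lceil M/B\rceil)$ parallel waves, each costing $O(C(P))$.
2. The finite differences $\Delta_h x,\Delta_h^2 x$ cost $O(MPd)$ in total and have depth $O(1)$ per node. The same holds for the weighted residual sum $\mathcal L_{\mathrm{res}}$ and for the optional $\mathcal L_{v_0}$, which needs one responsibility evaluation at $t_0$.
3. Backpropagation through $\mathcal L_{\mathrm{res}}$ decomposes over nodes in the same way. Its cost is already included in $C(P)$ by assumption, and it parallelizes identically.

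This gives $O(\lceil M/B\rceil C(P))$.

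\textbf{WLM.} In \cref{alg:wlm}, the acceleration $a_j$ at step $j$ is evaluated at $\tilde x_j$, which is produced by the leapfrog step from $(\tilde x_{j-1},\tilde v_{j-1},a_{j-1})$. This creates a sequential chain of up to $M$ force evaluations, one per solver step. The parallel hardware can only help within a single evaluation, and $C(P)$ already accounts for the $P$ particles, so it cannot help across steps.

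1. The forward rollout costs $O(M\,C(P))$.
2. Backpropagation through the rollout traverses the same chain in reverse and has the same $O(M\,C(P))$ cost.
3. The random horizon $H\le K$ only reduces the number of steps, so in the worst case it gives $O(M\,C(P))$ as an upper bound.

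The main point needing care is making the cost model explicit: $B$ parallel force evaluations help only when those evaluations are independent. The argument then hinges on showing the dependency structure, namely independence across $j$ for Double-Stitch and a data dependence of $a_j$ on $a_{j-1}$ for WLM. I would state this model at the start of the proof. I would also note that the gradient with respect to the shared parameters $\theta$ requires a reduction over $j$. That reduction costs $O(\log M)$ or $O(M)$ cheap additions and is dominated by $C(P)$ under the standard assumption, so it does not change either bound.
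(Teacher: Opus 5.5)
Your proposal is correct and follows the paper's own argument. Both count $O(M)$ force evaluations and note that WLM's leapfrog rollout chains them sequentially, while the Double-Stitch residual terms only read stored neighbouring positions, so they can be batched into $\lceil (M+1)/B\rceil$ parallel waves whose gradients simply add. Your extra remarks on backpropagation through the rollout, the random horizon $H\le K$ and the gradient reduction over $j$ only spell out what the paper's accounting leaves implicit.
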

    \begin{proof}
        Since we assume that the parameters are identical, both methods will arrive at a computational grid of the same size. Therefore if we count force evaluations, both methods will require $O(M)$ evaluations.
        
        However WLM has to unroll a simulated trajectory while Double-Stitch stores it as the learned particles. The unroll has to happen sequentially while Double-Stitch is able to compute the residual in parallel among many nodes (see \cref{eq:particle-second-order-residual}).
        
        WLM evaluates one force for each of $M$ steps and the steps have to be sequential (see \cref{alg:wlm}), therefore the complexity is $O(M\,C(P))$.
        
        Double-Stitch on the other hand does not require a node to use another's output (it only reads the positions of its neighbors), meaning we can compute forces for $M+1$ terms in batches of $B$ terms and add up their gradients, since the loss is their sum, making its complexity $\lceil (M+1)/B\rceil\,C(P)$ time.

        For $B=1$ both methods take $O(M\,C(P))$ time, and if both have the same $C(P)$, Double-Stitch is up to $\min(B,M)$ times faster. Note that this compares a single update, not the number of updates needed to fit the data. In reality Double-Stitch also uses KL, which is generally cheaper to compute than Sinkhorn.
    \end{proof}
\endgroup

\section{Experimental details}
\label{app:technical-details}
\raggedbottom

\subsection{Data}
\label{app:data-details}

    \paragraph{Synthetic} The demonstration in \cref{fig:gaussian-mixture,fig:synthetic-density-comparison} uses three flows: conservative (Hamiltonian), damped (Lagrangian), and gradient (WGF). 
    
    The conservative flow is a harmonic process:
    \begin{equation*}
        \ddot x_t=-\gamma\dot x_t-\omega_0^2x_t,\quad \omega_0=2\pi/2.7, \quad \gamma=0.
    \end{equation*}
    The damped flow is the same process but with $\gamma = 1$.

    The gradient flow is an Ornstein--Uhlenbeck process:
    \begin{equation*}
        \mathrm dx_t=-x_t\dd t+\sqrt2\dd B_t, \quad \rho_0 = \frac{1}{2}\mathcal N(-2.5,0.30^2)+\frac{1}{2}\mathcal N(2.5,0.65^2), \quad \rho_{t \rightarrow \infty} = \mathcal N(0,1),
    \end{equation*}
    where $B_t$ is a standard Brownian motion.
    The solutions for all three flows are analytic: the particle dynamics are linear in position (plus velocity for the harmonic flow) so each Gaussian component of the initial distribution stays Gaussian \citep[Ch.~6]{sarkka2019}.
    
    For training, we sample $100$ particles at each of five equally spaced times on $[0.15,1.5]$. We use even snapshots ($0$, $2$, $4$) for training and odd ones ($1$, $3$) for testing (similar to L2O on the EB dataset).

    \paragraph{Single-cell data} The embryoid-body (EB) dataset follows human embryonic stem-cell differentiation \citep{moon2019}. We use the processed data from TrajectoryNet \citep{tong2020}, also used by WLM and Stitching. It contains $16819$ cells in five time bins, retains the first five principal components and standardizes each dimension using its mean and std across all cells. LPO uses a fixed random 30\% of cells for validation ($11773$ for training and $5046$ for testing) with the $W_1$ metric averaged over the four noninitial times. L1O withholds each intermediate snapshot and fits a separate model on each with the $W_1$ metric averaged over the model performances. L2O trains on snapshots $0,2,4$ and evaluates $W_2$ at snapshots $1,3$ (we also report the average).

    We visualize the EB data in \cref{fig:eb-data-snapshots} using a common two-dimensional UMAP \citep{mcinnes2018}.

    \begin{figure}[H]
    \centering
    \includegraphics[width=\linewidth]{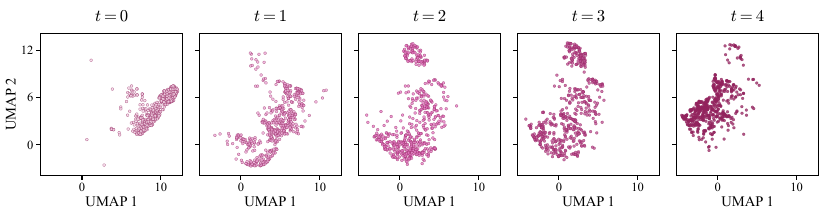}
    \caption{\textbf{EB snapshots over time.} Shared UMAP.}
    \label{fig:eb-data-snapshots}
\end{figure}

    \paragraph{Ocean vortex data} HYCOM ocean vortex datasets from the Gulf of Mexico \citep{guan2026wlm} contain 9 snapshots of $111$ particles for the small vortex and 11 snapshots of $400$ particles for the large vortex. For both, we use WLM's preprocessing. Small vortex offers an interpolation task: even snapshots are used for training and odd ones are held out for testing. The large-vortex dataset offers an extrapolation (or a forecasting task): snapshots $0$ to $9$ are used for training, and snapshot $10$ used for testing. Snapshot times are one time unit apart in all cases, snapshots are considered unpaired.

    We visualize the two ocean vortices in \cref{fig:oceans-data-snapshots}.

    \begin{figure}[H]
    \centering
    \includegraphics[width=\linewidth]{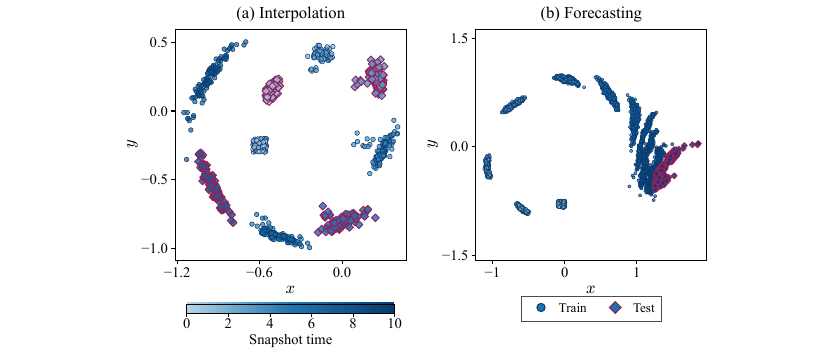}
    \caption{\textbf{Ocean vortex snapshots.} Color indicates time. Circular points are used for training, diamonds for testing.}
    \label{fig:oceans-data-snapshots}
\end{figure}

\subsection{Training and evaluation}
\label{app:training-details}

    We use settings similar to Stitching and WLM and list the main ones below; the configuration files in our code give the complete settings.

    \paragraph{Hardware and uncertainty} We run all timing experiments on NVIDIA RTX 4090 GPUs. We report accuracy as mean \textpm{} std over five seeds for EB and three seeds for oceans. Ablations, sensitivity studies and training time comparisons use three seeds. For published results, we retain the reported means and standard deviations.

    \paragraph{Double-Stitch} On EB and oceans, we parametrize the potential by an MLP with two hidden layers of $64$ SiLU units. We initialize the trajectories by OT-path, the learnable bandwidth at the coarsest scale, and the learnable friction at $\gamma_0=1$. On EB, we largely follow the Stitching settings of \citet[Appendix~G]{heinonen2026}: $50$ time nodes, $10000$ updates, learned mixture weights and batches of $256$, with only the cell coordinates as observations. We use $P=1000$ centers. On oceans, we use $P=100$ centers and $49$ time nodes for the small vortex, and $P=400$ centers and $51$ time nodes for the large vortex. The latter grid extends to the held-out time. Both ocean tasks use full batches and fixed uniform mixture weights.

    We train with Adam for $10000$ updates on EB and $40000$ on oceans. The learning rate starts at $0.005$ and follows cosine decay; the potential network has decoupled weight decay of $1.0$. We set the residual-loss weight to $\lambda=1$. The coefficients $c_V$ and $c_H$ are learned from initial values of $1$ and $0.03$, respectively, while $c_W=0$. We treat the mixture weights as constants inside the residual, so this loss does not update them. On oceans, we also use the initial-velocity constraint in \cref{eq:initial-velocity-loss}. Its weight $\lambda_{v_0}$ is the inverse empirical mean squared initial speed. We evaluate the final checkpoint and use $50$ damped-Verlet substeps per snapshot interval when integrating the learned dynamics.

    \paragraph{Stitching}
    \label{app:reproduction-audit}
    We report two settings for our Stitching reruns. On EB, \emph{original settings} follow Appendix G of \citet{heinonen2026}: $P=100$ centers, $50$ time nodes, a potential with two hidden layers of $64$ SiLU units, $10000$ Adam updates and a learnable bandwidth initialized by Silverman's rule. We scale time to $[0,1]$, use a $70/30$ cell split, learn the mixture weights and train with batches of $256$. We evaluate the learned density directly at the requested time. Our trajectory initialization differs from Appendix G: we use endpoint OT, while the original method couples consecutive snapshots. On oceans, \emph{original settings} use our first-order Stitching configuration with a budget of $40000$ updates and early stopping (patience $2000$, minimum training-loss improvement $10^{-4}$). We evaluate the checkpoint with the lowest training loss.

    For \emph{main settings}, we use the Double-Stitch setup with the velocity residual in \cref{eq:wgf}, without the initial-velocity constraint or early stopping. On EB, we use $P=400$ centers. We report static and time-dependent potentials separately.

    \paragraph{WLM} We use the released architectures and solver, with $10000$ training updates on EB and $50000$ on oceans. On EB, we use the released RNA velocities and a fixed Sinkhorn blur of $1.25$. On oceans, we distinguish two settings by how the blur is estimated. The \emph{main settings}, used in \cref{tab:oceans-comparison}, estimate it from the training snapshots only. The \emph{original settings} estimate it from all snapshots, as in the released code. Both settings use the same position clouds and training/test split, fit only the training snapshots and report the final checkpoint over all three seeds. WLM's attention potential takes time as an input for ocean interpolation, but not for EB or ocean forecasting. We distinguish published results from our reruns in \cref{app:eb-results,app:oceans-results}.

    \paragraph{Initialization sensitivity} We compare three trajectory initializations. Endpoint OT matches the first and last snapshots by optimal transport and linearly interpolates between them. OT-path instead chains optimal-transport matchings between consecutive snapshots. McCann-path approximates each snapshot by a Gaussian with the same moments and follows the Bures--Wasserstein geodesics between them \citep{mccann1997}.

    We also compare two bandwidth initializations. Silverman uses Silverman's rule \citep{silverman1986}, $1.06\,\hat\sigma_i N^{-1/(d+4)}$ in each dimension $i$, computed on the first training snapshot. Coarsest uses $\sqrt{\lambda_{\max}}$, the standard deviation along the principal axis of the pooled training snapshots, the coarsest scale at which a Gaussian mixture still resolves structure \citep{rose1990}. We first vary the bandwidth initialization with OT-path fixed, then vary the trajectory initialization with Silverman bandwidth fixed. For each seed, we keep the network initialization the same across these comparisons.

    \paragraph{Prediction} We use the three prediction modes described in \cref{sec:experiments}. Direct prediction evaluates the learned KDE $\hat\rho_t^\theta$. Pushforward integrates the learned dynamics from the latest observed snapshot before the test time. Rollout integrates from the initial snapshot without restarting at later observations. For second-order dynamics, pushforward also uses the observed velocities at its starting snapshot, while rollout uses only the global initial velocity. On EB, we use direct prediction for Stitching and Double-Stitch, and pushforward for WLM. On oceans, we report each protocol separately.

    \paragraph{Training time} Double-Stitch's training time in \cref{tab:actual-cost} includes initialization, compilation and training callbacks. WLM's covers its native training loop, including monitoring but not model setup. Neither measurement includes the final evaluation. Double-Stitch uses seeds $1$--$3$ for EB LPO/L1O and $0$--$2$ for the other tasks. WLM uses original settings, with seeds $2$--$4$ for EB LPO/L1O and $0$--$2$ for EB L2O and both ocean tasks. L1O sums the training times of the three holdout fits within each seed.

    For the particle-count comparison in \cref{tab:scaling}, we measure only optimizer execution time, synchronized on the GPU. We use seeds $0$--$2$ for all five tasks.

    \paragraph{Synthetic demonstration} For the six fits in \cref{fig:synthetic-density-comparison}, we use seed $0$, $P=50$ centers, $50$ time nodes and $5000$ Adam updates. The learning rate starts at $0.005$ and follows cosine decay, without weight decay. Both methods use OT-path initialization, a learnable bandwidth initialized by Silverman's rule, fixed uniform mixture weights, the forward-KL data loss and a static potential with two hidden layers of $64$ units. We learn $c_H$ from $0.03$ and fix $c_W=0$. We set $\lambda$ once at initialization so that the data and residual losses have equal gradient norms with respect to the center paths.

    Stitching uses the velocity residual and Double-Stitch uses the Lagrangian residual. For Double-Stitch, we learn friction from $\gamma_0=10$ on the Ornstein--Uhlenbeck process and fix $\gamma=0$ and $1$ on the conservative and damped oscillators, respectively. We report the mean held-out $W_1$ between the fitted and analytic position densities. For visualization, we smooth both methods' fitted densities with $\sigma=0.1$; the scores use the unsmoothed densities.

    \paragraph{Synthetic cost experiment}
    \label{app:cost-setup}
    For \cref{fig:cost-efficiency}, we use the harmonic oscillator and initial position distribution from \cref{fig:gaussian-mixture}, with zero initial velocity. We sample $1000$ independent positions at nine observation times on $[0,1.35]$ and at eight held-out midpoints. Both methods use static potentials with $\gamma=c_H=c_W=0$ and $c_V=1$ fixed. We train for $5000$ Adam updates over three seeds sharing the same data.

    Double-Stitch uses $1000$ centers, $41$ time nodes and a potential with two hidden layers of $64$ SiLU units. We use OT-path initialization, a learnable bandwidth initialized at the coarsest scale, fixed uniform mixture weights, and the forward-KL data loss with the Lagrangian residual weighted by $\lambda=1$. WLM uses four attention layers with four heads, width $64$ and feed-forward width $512$. We train it with batches of $256$ and a random-horizon debiased Sinkhorn objective, without moving averages. The learning rates are constant at $10^{-3}$ for Double-Stitch and $10^{-4}$ for WLM, without weight decay. Double-Stitch's time-grid spacing $h$ and WLM's training integration step are both $0.03375$.

    Double-Stitch predicts directly from its KDE. WLM rolls out from the initial cloud with zero velocity and $50$ substeps between evaluation times. We average training $W_1$ over the eight positive observation times and test $W_1$ over the eight held-out midpoints. At common update counts, we plot the mean and std against the mean synchronized execution time of the forward, backward and optimizer operations.

\section{Additional results}
\label{app:additional-results}
\raggedbottom

\subsection{Identifiability}
\label{app:identifiability}

    Snapshots alone generally do not determine the underlying dynamics. Theoretical analysis of this question exists for SDEs: the dynamics are identifiable from the snapshots for (a) linear SDEs with additive noise whose initial distribution does not have generalized rotational symmetries \citep{guan2024}, and (b) gradient-flow SDEs with constant diffusivity observed away from equilibrium \citep{guan2026sde}. These results do not cover our setting: second-order (Lagrangian) dynamics with finitely many snapshots. We therefore provide examples that demonstrate the challenges of identifiability.
     
    \begin{figure}[H]
    \centering
    \captionsetup[subfigure]{font=figpanel,labelfont=normalfont}
    \begin{subfigure}[t]{.48\linewidth}
        \centering
        \caption{Canonical velocity: translation}
        \label{fig:canonical-velocity-translation}
        \includegraphics[width=\linewidth]{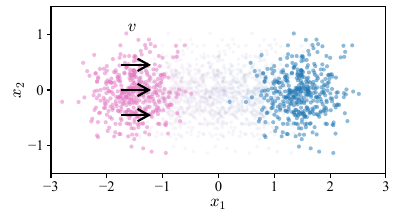}
    \end{subfigure}\hfill%
    \begin{subfigure}[t]{.48\linewidth}
        \centering
        \caption{General velocity: translation and rotation}
        \label{fig:general-velocity-rotation}
        \includegraphics[width=\linewidth]{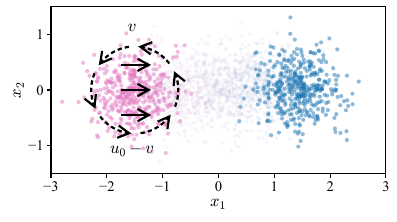}
    \end{subfigure}
    \par\vspace{-2pt}
    \includegraphics[width=.30\linewidth]{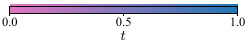}
    \caption{\textbf{Canonical and general velocities can produce the same snapshots.} Left: translation with the canonical velocity (solid arrows). Right: translation and rotation (dashed arcs) with a general velocity. Color indicates time.}
    \label{fig:canonical-velocity-transport}
\end{figure}

    \paragraph{Canonical and general velocities} Since we operate in Wasserstein spaces, our metric cannot differentiate between populations with the same density distribution but different velocity fields. For this reason, a canonical velocity is often considered. It is irrotational and has minimum kinetic energy among admissible fields \citep{ambrosio2008}. However, if the initial velocity is not canonical, the dynamical law must be able to take this into account, which is exactly what our Clebsch-based approach does (see \cref{app:clebsch-derivation}).

    In \cref{fig:canonical-velocity-transport} we show a Gaussian distribution that is transported under constant horizontal velocity from left to right ($\mathcal N ((-1.5, 0), 0.4^2 I)$, $400$ points). On the left its velocity is a pure translation ($u_0 = v_0 = (3,0)$). On the right it additionally has a rotational component ($\omega=3\pi/2$), meaning the particles experience an additional force. The density is the same at all times, however the dynamical law has to be written for $u$.

    \paragraph{Different initial velocities}
    Even with a known potential and friction, a finite set of position snapshots does not necessarily determine the initial velocity. We illustrate this with trajectories of a single particle following Lagrangian dynamics in a circular potential $V(x)=\tfrac12\omega_0^2\lVert x\rVert^2$ ($\omega_0=4\pi$, $\gamma=0.2\omega_0$). We set the particle at $x_0=(1,0)$ and consider two different initial velocities: $u_0^{(1)}=(0,-\omega_0)$ and $u_0^{(2)}=(0,-\omega_0/2)$. 
    
    It is easy to show that both trajectories pass through the same points: 
    \begin{equation*}
        x(t_k)=(-1)^k\mathrm{e}^{-\gamma t_k/2}x_0,
        \qquad t_k=\frac{k\pi}{\omega},\quad  \omega=\sqrt{\omega_0^2-\gamma^2/4}, \quad k=0,\ldots,4.
    \end{equation*}
    Therefore, if the observed snapshots are selected precisely as the intersection points of these trajectories (see \cref{fig:identifiability-v0}), the trajectories will not be identifiable.

    \begin{figure}[H]
    \centering
    \includegraphics[width=\linewidth]{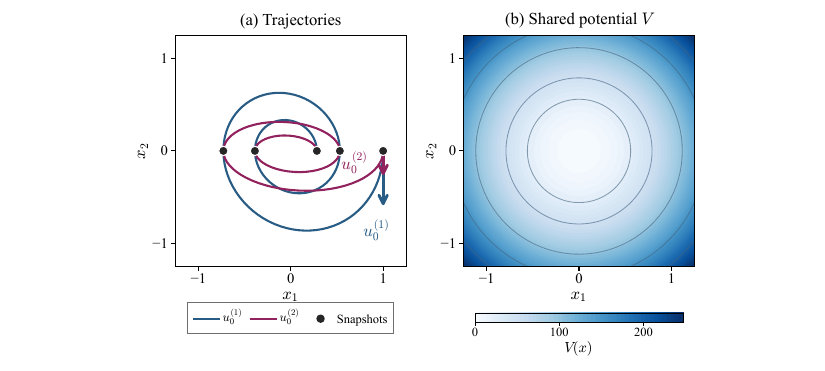}
    \caption{\textbf{Snapshots are not identifiable without the initial velocity.} Left: simulated trajectories with their individual initial velocities pass through the same points and are therefore not identifiable from them. Right: shared circular potential.}
    \label{fig:identifiability-v0}
\end{figure}

    \paragraph{Different static potentials}
    The situation is similar in the opposite case: when the initial conditions are the same, multiple static potentials can produce similar trajectories. 
    
    We again consider a single particle following Lagrangian dynamics, but now with a single initial condition ($x_0=(1,0)$ and $u_0=(-0.1,-1)$) and under the following potentials:
    \begin{equation*}
        V_{\mathrm{spiral}}(r,\theta)=K\bigl[1-\cos\bigl(\theta-\beta(r-r_0)\bigr)\bigr]+U(r),
        \qquad V_{\mathrm{circular}}(x)=\tfrac12\lambda\lVert x\rVert^2,
    \end{equation*}
    where $U'(r)=-a_0\sqrt{1+\beta^2r^2}$. We tune the parameters $K=0.3$, $\beta=15.47$, $r_0=0.96170$, $a_0=-0.07251$ and $\lambda=1.01$ using least squares to match the two trajectories.
    
    From \cref{fig:identifiability-potentials} we see that we are able to find different potentials that again produce nearly intersecting trajectories. Similar to the previous case, if the observed snapshots are at the intersection points, then the exact potential is non-identifiable.

    \begin{figure}[H]
    \centering
    \includegraphics[width=\linewidth]{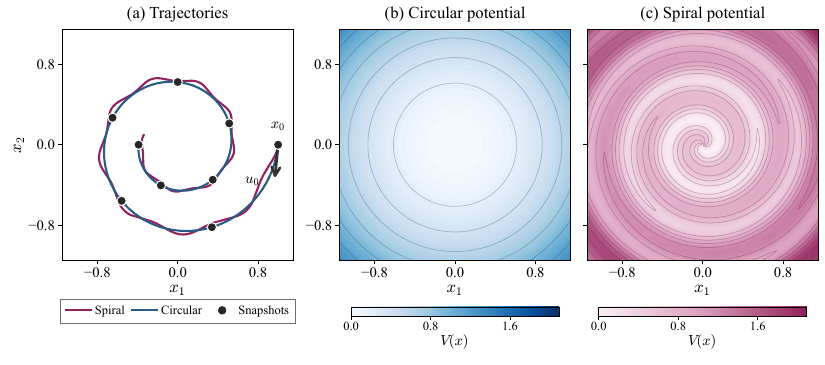}
    \caption{\textbf{Potentials are not identifiable from only snapshots.} Left: simulated trajectories with their shared initial velocities pass through the same points and are therefore not identifiable from them. Right: the potentials that produced the corresponding trajectories.}
    \label{fig:identifiability-potentials}
\end{figure}

\subsection{Single-cell dynamics}
\label{app:eb-results}

    \Cref{tab:eb-main,tab:eb-gap,tab:eb-hard} expand the three columns of \cref{tab:eb-comparison} to per-snapshot performance. Each table lists the previously reported results as well as our reruns. We rerun WLM and Stitching with original settings as well and compare them to the settings we used in \cref{app:reproduction-audit}.
    
    \begin{table}[H]
    \centering
    \setlength{\tabcolsep}{1.8pt}
    \caption{\textbf{Per-snapshot performance on EB LPO} ($W_1$, mean \textpm{} std, 5 seeds, lower is better).}
    \label{tab:eb-main}
    \appendixtablesize
    \begin{tabular}{@{}llrrrrr@{}}
\toprule
Method & Source / settings & \multicolumn{1}{c}{$t_1$} & \multicolumn{1}{c}{$t_2$} & \multicolumn{1}{c}{$t_3$} & \multicolumn{1}{c}{$t_4$} & \multicolumn{1}{c}{Mean} \\
\midrule
Neural SDE & \citealt{li2020} & $0.690\phantom{{}\pm 0.000}$ & $0.910\phantom{{}\pm 0.000}$ & $0.850\phantom{{}\pm 0.000}$ & $0.810\phantom{{}\pm 0.000}$ & $0.820\phantom{{}\pm 0.000}$ \\
TrajectoryNet & \citealt{tong2020} & $0.730\phantom{{}\pm 0.000}$ & $1.060\phantom{{}\pm 0.000}$ & $0.900\phantom{{}\pm 0.000}$ & $1.010\phantom{{}\pm 0.000}$ & $0.930\phantom{{}\pm 0.000}$ \\
SB-FBSDE & \citealt{chen2022} & $0.560\phantom{{}\pm 0.000}$ & $0.800\phantom{{}\pm 0.000}$ & $1.000\phantom{{}\pm 0.000}$ & $1.000\phantom{{}\pm 0.000}$ & $0.840\phantom{{}\pm 0.000}$ \\
NLSB & \citealt{koshizuka2023} & $0.680\phantom{{}\pm 0.000}$ & $0.840\phantom{{}\pm 0.000}$ & $0.810\phantom{{}\pm 0.000}$ & $0.790\phantom{{}\pm 0.000}$ & $0.780\phantom{{}\pm 0.000}$ \\
OT-CFM & \citealt{tong2024} & $0.780\phantom{{}\pm 0.000}$ & $0.760\phantom{{}\pm 0.000}$ & $0.770\phantom{{}\pm 0.000}$ & $0.750\phantom{{}\pm 0.000}$ & $0.770\phantom{{}\pm 0.000}$ \\
WLF-OT & \citealt{neklyudov2024} & $0.650\phantom{{}\pm 0.000}$ & $0.780\phantom{{}\pm 0.000}$ & $0.760\phantom{{}\pm 0.000}$ & $0.750\phantom{{}\pm 0.000}$ & $0.740\phantom{{}\pm 0.000}$ \\
WLF-SB & \citealt{neklyudov2024} & $0.630\phantom{{}\pm 0.000}$ & $0.790\phantom{{}\pm 0.000}$ & $0.770\phantom{{}\pm 0.000}$ & $0.740\phantom{{}\pm 0.000}$ & $0.730\phantom{{}\pm 0.000}$ \\
JKOnet & \citealt{bunne2022} & $1.530\phantom{{}\pm 0.000}$ & $1.270\phantom{{}\pm 0.000}$ & $1.130\phantom{{}\pm 0.000}$ & $1.410\phantom{{}\pm 0.000}$ & $1.340\phantom{{}\pm 0.000}$ \\
JKOnet$^*$, $V(x)$ & \citealt{terpin2024jkonetstar} & $0.990\phantom{{}\pm 0.000}$ & $1.110\phantom{{}\pm 0.000}$ & $1.060\phantom{{}\pm 0.000}$ & $1.300\phantom{{}\pm 0.000}$ & $1.120\phantom{{}\pm 0.000}$ \\
JKOnet$^*$, $V(x,t)$ & \citealt{terpin2024jkonetstar} & $0.690\phantom{{}\pm 0.000}$ & $0.770\phantom{{}\pm 0.000}$ & $0.690\phantom{{}\pm 0.000}$ & $0.780\phantom{{}\pm 0.000}$ & $0.730\phantom{{}\pm 0.000}$ \\
iJKOnet, $V(x)$ & \citealt{persiianov2025} & $0.920\phantom{{}\pm 0.000}$ & $1.110\phantom{{}\pm 0.000}$ & $0.950\phantom{{}\pm 0.000}$ & $1.210\phantom{{}\pm 0.000}$ & $1.050\phantom{{}\pm 0.000}$ \\
iJKOnet, $V(x,t)$ & \citealt{persiianov2025} & $0.510\phantom{{}\pm 0.000}$ & $0.580\phantom{{}\pm 0.000}$ & $0.570\phantom{{}\pm 0.000}$ & $0.640\phantom{{}\pm 0.000}$ & $0.580\phantom{{}\pm 0.000}$ \\
Stitching, $V(x)$ & \citealt{heinonen2026} & $0.460 \pm 0.010$ & $0.600 \pm 0.010$ & $0.600 \pm 0.010$ & $0.650 \pm 0.010$ & $0.580 \pm 0.010$ \\
Stitching, $V(x,t)$ & \citealt{heinonen2026} & $0.440 \pm 0.010$ & $0.560 \pm 0.010$ & $0.570 \pm 0.010$ & $0.600 \pm 0.020$ & $0.540 \pm 0.010$ \\
\grouprule
Stitching, $V(x)$ & Main settings & $0.509 \pm 0.003$ & $0.699 \pm 0.009$ & $0.681 \pm 0.017$ & $0.803 \pm 0.028$ & $0.673 \pm 0.005$ \\
Stitching, $V(x)$ & Original settings & $0.507 \pm 0.011$ & $0.650 \pm 0.017$ & $0.633 \pm 0.005$ & $0.682 \pm 0.010$ & $0.618 \pm 0.007$ \\
Stitching, $V(x,t)$ & Original settings & $0.490 \pm 0.012$ & $0.601 \pm 0.016$ & $0.595 \pm 0.013$ & $0.639 \pm 0.008$ & $0.581 \pm 0.010$ \\
WLM & Main settings & $0.660 \pm 0.018$ & $0.663 \pm 0.016$ & $0.617 \pm 0.028$ & $0.789 \pm 0.044$ & $0.682 \pm 0.009$ \\
Double-Stitch, $V(x)$ & Main settings & $0.546 \pm 0.005$ & $0.615 \pm 0.011$ & $0.676 \pm 0.016$ & $0.634 \pm 0.008$ & $0.618 \pm 0.001$ \\
Double-Stitch, $V(x,t)$ & Main settings & $0.484 \pm 0.005$ & $0.572 \pm 0.014$ & $0.582 \pm 0.007$ & $0.582 \pm 0.004$ & $0.555 \pm 0.005$ \\
\bottomrule
\end{tabular}

\end{table}

    \begin{table}[H]
    \centering
    \setlength{\tabcolsep}{1.8pt}
    \caption{\textbf{Per-snapshot performance on EB L1O} ($W_1$, mean \textpm{} std, 5 seeds, lower is better).}
    \label{tab:eb-gap}
    \appendixtablesize
    \begin{tabular}{@{}llrrrr@{}}
\toprule
Method & Source / settings & \multicolumn{1}{c}{$t_1$} & \multicolumn{1}{c}{$t_2$} & \multicolumn{1}{c}{$t_3$} & \multicolumn{1}{c}{Mean} \\
\midrule
TrajectoryNet & \citealt{kapusniak2024} & \tbdnum & \tbdnum & \tbdnum & $0.848\phantom{{}\pm 0.000}$ \\
NLSB & \citealt{kapusniak2024} & \tbdnum & \tbdnum & \tbdnum & $0.970\phantom{{}\pm 0.000}$ \\
OT-CFM & \citealt{kapusniak2024} & \tbdnum & \tbdnum & \tbdnum & $0.790 \pm 0.068$ \\
WLF-OT & \citealt{kapusniak2024} & \tbdnum & \tbdnum & \tbdnum & $0.742 \pm 0.012$ \\
WLF-SB & \citealt{kapusniak2024} & \tbdnum & \tbdnum & \tbdnum & $0.746 \pm 0.016$ \\
Reg. CNF & \citealt{kapusniak2024} & \tbdnum & \tbdnum & \tbdnum & $0.825 \pm 0.429$ \\
DSBM & \citealt{kapusniak2024} & \tbdnum & \tbdnum & \tbdnum & $1.775 \pm 0.429$ \\
DSB & \citealt{kapusniak2024} & \tbdnum & \tbdnum & \tbdnum & $0.862 \pm 0.023$ \\
SF2M-Sink & \citealt{kapusniak2024} & \tbdnum & \tbdnum & \tbdnum & $1.198 \pm 0.342$ \\
SF2M-Geo & \citealt{kapusniak2024} & \tbdnum & \tbdnum & \tbdnum & $0.879 \pm 0.148$ \\
SF2M-Exact & \citealt{kapusniak2024} & \tbdnum & \tbdnum & \tbdnum & $0.793 \pm 0.066$ \\
I-CFM & \citealt{kapusniak2024} & \tbdnum & \tbdnum & \tbdnum & $0.872 \pm 0.087$ \\
SB-CFM & \citealt{kapusniak2024} & \tbdnum & \tbdnum & \tbdnum & $1.221 \pm 0.380$ \\
WLF-UOT & \citealt{kapusniak2024} & \tbdnum & \tbdnum & \tbdnum & $0.738 \pm 0.014$ \\
I-MFM & \citealt{kapusniak2024} & \tbdnum & \tbdnum & \tbdnum & $0.822 \pm 0.042$ \\
OT-MFM & \citealt{kapusniak2024} & \tbdnum & \tbdnum & \tbdnum & $0.713 \pm 0.039$ \\
AM & \citealt{neklyudov2023} & \tbdnum & \tbdnum & \tbdnum & $0.944 \pm 0.003$ \\
UAM & \citealt{neklyudov2023} & \tbdnum & \tbdnum & \tbdnum & $0.924 \pm 0.004$ \\
sAM & \citealt{neklyudov2023} & \tbdnum & \tbdnum & \tbdnum & $0.939 \pm 0.003$ \\
DICE & \citealt{blickhan2025} & \tbdnum & \tbdnum & \tbdnum & $0.826 \pm 0.002$ \\
WLM & \citealt{guan2026wlm} & \tbdnum & \tbdnum & \tbdnum & $0.704 \pm 0.021$ \\
\grouprule
Stitching, $V(x)$ & Main settings & $0.844 \pm 0.046$ & $0.793 \pm 0.020$ & $0.822 \pm 0.012$ & $0.819 \pm 0.021$ \\
Stitching, $V(x)$ & Original settings & $0.738 \pm 0.038$ & $0.807 \pm 0.019$ & $0.844 \pm 0.014$ & $0.796 \pm 0.014$ \\
Stitching, $V(x,t)$ & Original settings & $0.654 \pm 0.023$ & $0.726 \pm 0.023$ & $0.769 \pm 0.007$ & $0.717 \pm 0.010$ \\
WLM & Main settings & $0.713 \pm 0.019$ & $0.753 \pm 0.060$ & $0.735 \pm 0.020$ & $0.734 \pm 0.023$ \\
Double-Stitch, $V(x)$ & Main settings & $0.692 \pm 0.012$ & $0.703 \pm 0.011$ & $0.805 \pm 0.004$ & $0.733 \pm 0.004$ \\
Double-Stitch, $V(x,t)$ & Main settings & $0.695 \pm 0.006$ & $0.704 \pm 0.012$ & $0.754 \pm 0.007$ & $0.717 \pm 0.008$ \\
\bottomrule
\end{tabular}

\end{table}

    \begin{table}[H]
    \centering
    \setlength{\tabcolsep}{1.8pt}
    \caption{\textbf{Per-snapshot performance on EB L2O} ($W_2$, mean \textpm{} std, 5 seeds, lower is better).}
    \label{tab:eb-hard}
    \appendixtablesize
    \begin{tabular}{@{}llrrr@{}}
\toprule
Method & Source / settings & \multicolumn{1}{c}{$t_1$} & \multicolumn{1}{c}{$t_3$} & \multicolumn{1}{c}{Mean} \\
\midrule
TrajectoryNet & \citealt{tong2020} & $2.030 \pm 0.040$ & $1.930 \pm 0.080$ & $1.980\phantom{{}\pm 0.000}$ \\
Vanilla-SB & \citealt{vargas2021} & $1.490 \pm 0.060$ & $1.550 \pm 0.030$ & $1.520\phantom{{}\pm 0.000}$ \\
MMSB & \citealt{shen2025} & $1.270 \pm 0.030$ & $1.570 \pm 0.050$ & $1.420\phantom{{}\pm 0.000}$ \\
DMSB & \citealt{chen2023} & $1.130 \pm 0.080$ & $1.450 \pm 0.160$ & $1.290\phantom{{}\pm 0.000}$ \\
JKOnet$^*$, $V(x)$ & \citealt{terpin2024jkonetstar} & $1.150 \pm 0.030$ & $2.530 \pm 0.010$ & $1.840\phantom{{}\pm 0.000}$ \\
JKOnet$^*$, $V(x,t)$ & \citealt{terpin2024jkonetstar} & $4.410 \pm 1.500$ & $2.770 \pm 0.200$ & $3.590\phantom{{}\pm 0.000}$ \\
iJKOnet, $V(x)$ & \citealt{persiianov2025} & $1.080 \pm 0.010$ & $1.150 \pm 0.000$ & $1.120\phantom{{}\pm 0.000}$ \\
iJKOnet, $V(x,t)$ & \citealt{persiianov2025} & $0.980 \pm 0.040$ & $0.850 \pm 0.020$ & $0.920\phantom{{}\pm 0.000}$ \\
Stitching, $V(x)$ & \citealt{heinonen2026} & $0.850 \pm 0.060$ & $0.990 \pm 0.030$ & $0.920 \pm 0.030$ \\
Stitching, $V(x,t)$ & \citealt{heinonen2026} & $0.840 \pm 0.050$ & $0.920 \pm 0.030$ & $0.880 \pm 0.040$ \\
\grouprule
Stitching, $V(x)$ & Main settings & $0.797 \pm 0.022$ & $0.893 \pm 0.007$ & $0.845 \pm 0.014$ \\
Stitching, $V(x)$ & Original settings & $0.752 \pm 0.037$ & $0.891 \pm 0.007$ & $0.821 \pm 0.019$ \\
Stitching, $V(x,t)$ & Original settings & $0.716 \pm 0.038$ & $0.856 \pm 0.025$ & $0.786 \pm 0.008$ \\
WLM & Main settings & $0.846 \pm 0.045$ & $0.800 \pm 0.120$ & $0.823 \pm 0.041$ \\
Double-Stitch, $V(x)$ & Main settings & $0.755 \pm 0.008$ & $0.832 \pm 0.004$ & $0.794 \pm 0.004$ \\
Double-Stitch, $V(x,t)$ & Main settings & $0.769 \pm 0.004$ & $0.829 \pm 0.008$ & $0.799 \pm 0.003$ \\
\bottomrule
\end{tabular}

\end{table}

    In \cref{fig:eb-prediction,fig:eb-l2o-prediction} we show Double-Stitch predictions for L1O and L2O scenarios.

    \begin{figure}[H]
    \centering
    \includegraphics[width=\linewidth]{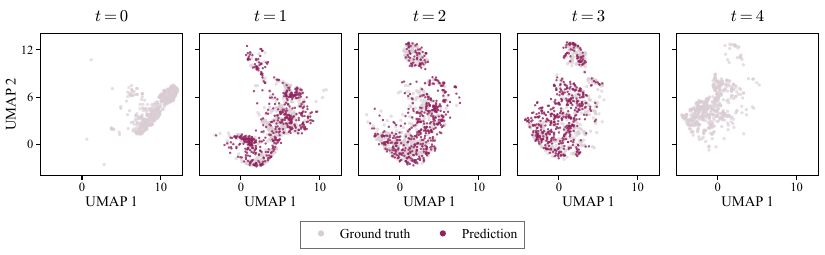}
    \caption{\textbf{Double-Stitch predictions on EB L1O} ($V(x)$, seed $0$). Shared UMAP. Pale points are ground truth, dark points are predictions.}
    \label{fig:eb-prediction}
\end{figure}

    \begin{figure}[H]
    \centering
    \includegraphics[width=\linewidth]{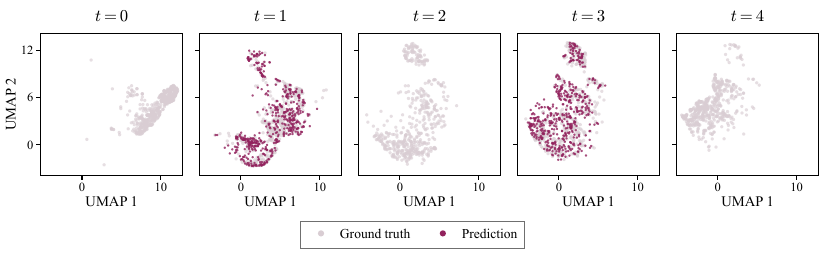}
    \caption{\textbf{Double-Stitch predictions on EB L2O} ($V(x)$, seed $0$). Shared UMAP. Pale points are ground truth, dark points are predictions.}
    \label{fig:eb-l2o-prediction}
\end{figure}

\subsection{Ocean vortex dynamics}
\label{app:oceans-results}

    In \cref{tab:hycom-main,tab:hycom-forecast} we show per-snapshot results for interpolation (oceans small vortex) and the performance on three forecasting protocols for extrapolation (oceans large vortex).

    \begin{table}[H]
    \centering
    \setlength{\tabcolsep}{1.8pt}
    \caption{\textbf{Per-snapshot performance on the small ocean vortex} ($W_1$, mean \textpm{} std, 3 seeds, lower is better).}
    \label{tab:hycom-main}
    \appendixtablesize
    \begin{tabular}{@{}llrrrrr@{}}
\toprule
Method & Source / settings & \multicolumn{1}{c}{$t_1$} & \multicolumn{1}{c}{$t_3$} & \multicolumn{1}{c}{$t_5$} & \multicolumn{1}{c}{$t_7$} & \multicolumn{1}{c}{Mean} \\
\midrule
AM & \citealt{neklyudov2023} & $0.358 \pm 0.009$ & $0.353 \pm 0.020$ & $0.447 \pm 0.013$ & $0.346 \pm 0.016$ & $0.376\phantom{{}\pm 0.000}$ \\
UAM & \citealt{neklyudov2023} & $0.291 \pm 0.008$ & $0.378 \pm 0.021$ & $0.514 \pm 0.010$ & $0.349 \pm 0.009$ & $0.383\phantom{{}\pm 0.000}$ \\
sAM & \citealt{neklyudov2023} & $0.358 \pm 0.009$ & $0.354 \pm 0.010$ & $0.451 \pm 0.016$ & $0.350 \pm 0.009$ & $0.378\phantom{{}\pm 0.000}$ \\
DICE & \citealt{blickhan2025} & $0.369 \pm 0.019$ & $0.252 \pm 0.020$ & $0.240 \pm 0.011$ & $0.115 \pm 0.009$ & $0.244\phantom{{}\pm 0.000}$ \\
Vanilla-SB & \citealt{vargas2021} & $0.112 \pm 0.061$ & $0.178 \pm 0.073$ & $0.262 \pm 0.043$ & $0.305 \pm 0.032$ & $0.214\phantom{{}\pm 0.000}$ \\
OT-CFM & \citealt{tong2024} & $0.148 \pm 0.004$ & $0.227 \pm 0.008$ & $0.191 \pm 0.012$ & $0.250 \pm 0.018$ & $0.204\phantom{{}\pm 0.000}$ \\
OT-MFM & \citealt{kapusniak2024} & $0.107 \pm 0.014$ & $0.056 \pm 0.014$ & $0.052 \pm 0.011$ & $0.070 \pm 0.021$ & $0.071\phantom{{}\pm 0.000}$ \\
Curly-FM & \citealt{petrovic2025curlyfm} & $0.019 \pm 0.003$ & $0.045 \pm 0.005$ & $0.027 \pm 0.001$ & $0.030 \pm 0.006$ & $0.030\phantom{{}\pm 0.000}$ \\
SBIRR & \citealt{shen2025} & $0.073 \pm 0.035$ & $0.087 \pm 0.033$ & $0.062 \pm 0.010$ & $0.082 \pm 0.022$ & $0.076\phantom{{}\pm 0.000}$ \\
Snap-MMD & \citealt{berlinghieri2025} & $0.051 \pm 0.003$ & $0.067 \pm 0.002$ & $0.131 \pm 0.005$ & $0.056 \pm 0.001$ & $0.076\phantom{{}\pm 0.000}$ \\
WLM, published & \citealt{guan2026wlm} & $0.039 \pm 0.002$ & $0.028 \pm 0.000$ & $0.083 \pm 0.002$ & $0.050 \pm 0.004$ & $0.050\phantom{{}\pm 0.000}$ \\
\grouprule
Stitching, $V(x)$ & Main settings & $1.538 \pm 0.325$ & $1.490 \pm 0.331$ & $1.431 \pm 0.263$ & $1.601 \pm 0.177$ & $1.515 \pm 0.273$ \\
Stitching, $V(x,t)$ & Main settings & $0.176 \pm 0.043$ & $0.158 \pm 0.038$ & $0.144 \pm 0.031$ & $0.187 \pm 0.028$ & $0.166 \pm 0.035$ \\
Stitching, $V(x)$ & Original settings & $8.180 \pm 0.643$ & $9.638 \pm 1.418$ & $0.475 \pm 0.115$ & $0.705 \pm 0.046$ & $4.750 \pm 0.535$ \\
Stitching, $V(x,t)$ & Original settings & $0.080 \pm 0.031$ & $0.059 \pm 0.019$ & $0.099 \pm 0.068$ & $0.178 \pm 0.026$ & $0.104 \pm 0.020$ \\
WLM & Main settings & $0.056 \pm 0.027$ & $0.048 \pm 0.012$ & $0.086 \pm 0.019$ & $0.065 \pm 0.039$ & $0.064 \pm 0.015$ \\
WLM & Original settings & $0.037 \pm 0.002$ & $0.030 \pm 0.005$ & $0.087 \pm 0.015$ & $0.083 \pm 0.064$ & $0.059 \pm 0.015$ \\
Double-Stitch, $V(x)$ & Main settings & $0.068 \pm 0.023$ & $0.033 \pm 0.004$ & $0.087 \pm 0.010$ & $0.070 \pm 0.043$ & $0.065 \pm 0.004$ \\
Double-Stitch, $V(x,t)$ & Main settings & $0.047 \pm 0.025$ & $0.037 \pm 0.003$ & $0.132 \pm 0.015$ & $0.076 \pm 0.007$ & $0.073 \pm 0.009$ \\
\bottomrule
\end{tabular}

\end{table}

    \begin{table}[H]
    \centering
    \setlength{\tabcolsep}{1.8pt}
    \caption{\textbf{Per-snapshot performance on the large ocean vortex} ($W_1$, mean \textpm{} std, 3 seeds, lower is better). Dashes indicate unreported or unavailable cells.}
    \label{tab:hycom-forecast}
    \appendixtablesize
    \begin{tabular}{@{}llrrr@{}}
\toprule
Method & Source / settings & \multicolumn{1}{c}{Rollout $0\!\to\!10$} & \multicolumn{1}{c}{Pushfwd. $9\!\to\!10$} & \multicolumn{1}{c}{Direct} \\
\midrule
SBIRR & \citealt{shen2025} & $\phantom{0}1.062 \pm 0.111$ & $0.567 \pm 0.014$ & \tbdnum \\
Snap-MMD & \citealt{berlinghieri2025} & $\phantom{0}0.896 \pm 0.141$ & $0.473 \pm 0.013$ & \tbdnum \\
WLM, published & \citealt{guan2026wlm} & $\phantom{0}0.180 \pm 0.038$ & $0.065 \pm 0.005$ & \tbdnum \\
\grouprule
Stitching, $V(x)$ & Main settings & $14.370 \pm 6.123$ & $1.388 \pm 0.119$ & $0.295 \pm 0.032$ \\
Stitching, $V(x,t)$ & Main settings & $\phantom{0}0.224 \pm 0.017$ & $0.195 \pm 0.022$ & $0.212 \pm 0.018$ \\
Stitching, $V(x)$ & Original settings & $\phantom{0}0.149 \pm 0.007$ & $0.205 \pm 0.006$ & $0.206 \pm 0.001$ \\
Stitching, $V(x,t)$ & Original settings & $\phantom{0}0.211 \pm 0.092$ & $0.170 \pm 0.004$ & $0.175 \pm 0.008$ \\
WLM & Main settings & $\phantom{0}0.194 \pm 0.048$ & $0.088 \pm 0.023$ & \tbdnum \\
WLM & Original settings & $\phantom{0}0.889 \pm 1.126$ & $0.238 \pm 0.275$ & \tbdnum \\
Double-Stitch, $V(x)$ & Main settings & $\phantom{0}1.839 \pm 0.539$ & $0.134 \pm 0.003$ & $0.209 \pm 0.013$ \\
Double-Stitch, $V(x,t)$ & Main settings & $\phantom{0}0.222 \pm 0.008$ & $0.119 \pm 0.019$ & $0.194 \pm 0.007$ \\
\bottomrule
\end{tabular}

\end{table}

    In \cref{fig:oceans-prediction} we show Double-Stitch predictions for the small vortex interpolation.

    \begin{figure}[H]
    \centering
    \includegraphics[width=\linewidth]{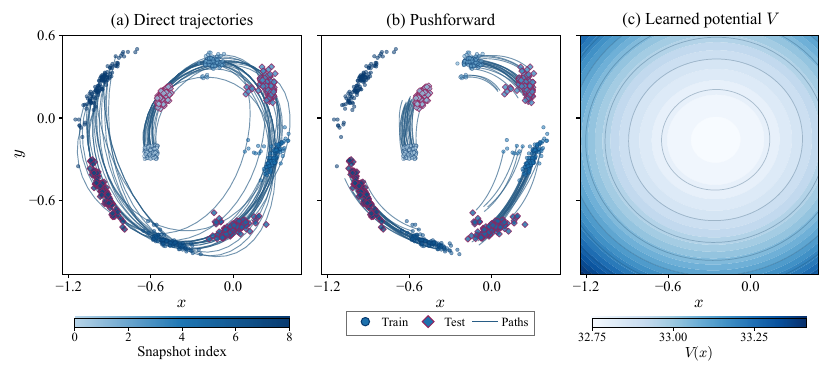}
    \caption{\textbf{Double-Stitch predictions on the small ocean vortex} ($V(x)$, seed $0$). Left: learned trajectories (used for direct evaluation). Center: pushforward predictions (used for pushforward evaluation). Right: learned potential.}
    \label{fig:oceans-prediction}
\end{figure}

\subsection{Synthetic demonstration}
\label{app:synthetic-results}

    \Cref{fig:synthetic-density-comparison} shows the ground truth and fitted densities for the three Gaussian mixture flows (see \cref{sec:demonstrations}). Each method uses the same training snapshots and fitting budget (see the settings in \cref{app:training-details}).

    \begin{figure}[H]
    \centering
    \includegraphics[width=\linewidth]{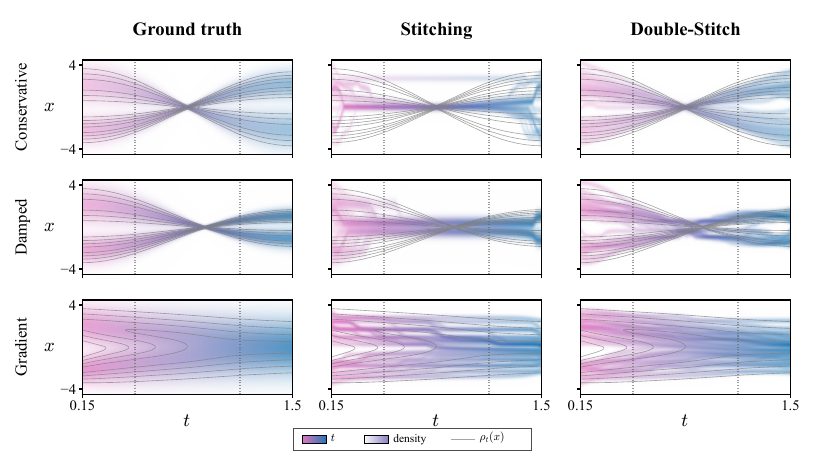}
    \caption{\textbf{Double-Stitch learns all three Gaussian-mixture flows.} Left: ground-truth conservative, damped and gradient densities. Center: Stitching, which fails on the conservative flow and underperforms on the damped flow. Right: Double-Stitch, which succeeds on all three. Dotted lines mark the held-out times.}
    \label{fig:synthetic-density-comparison}
\end{figure}

\subsection{Ablations and sensitivity studies}
\label{app:sensitivity}

    \paragraph{Prediction mode and settings} We compare the performance of Stitching, WLM, and Double-Stitch on the ocean datasets across all prediction modes in \cref{tab:prediction-protocol}. The main and original settings of each method are described in \cref{app:reproduction-audit}. We find that Double-Stitch has similar performance under direct and pushforward evaluations; rollout is worse, but surprisingly improves if a time-dependent potential is used. Stitching scores well under direct evaluation, but not under the other two, likely because its first-order condition is essentially misspecified on this data. For WLM, the original settings are slightly better on interpolation but worse on forecasting.

    \begin{table}[H]
    \centering
    \setlength{\tabcolsep}{3pt}
    \caption{\textbf{Double-Stitch and WLM sensitivity to the prediction mode on ocean vortices} ($W_1$, mean \textpm{} std, 3 seeds, lower is better). Dashes indicate unavailable cells.}
    \label{tab:prediction-protocol}
    \appendixtablesize
    \begin{tabular}{@{}llrrr@{}}
\toprule
Method & Settings & \multicolumn{1}{c}{Direct} & \multicolumn{1}{c}{Pushforward} & \multicolumn{1}{c}{Rollout} \\
\midrule
\multicolumn{5}{l}{\emph{Interpolation}} \\
Stitching, $V(x)$ & Main settings & $0.366 \pm 0.015$ & $1.515 \pm 0.273$ & $\phantom{0}3.858 \pm 0.912$ \\
Stitching, $V(x,t)$ & Main settings & $0.125 \pm 0.022$ & $0.166 \pm 0.035$ & $1.089 \pm 0.475$ \\
Stitching, $V(x)$ & Original settings & $0.301 \pm 0.016$ & $4.750 \pm 0.535$ & $32.764 \pm 2.684$ \\
Stitching, $V(x,t)$ & Original settings & $0.104 \pm 0.021$ & $0.104 \pm 0.020$ & $\phantom{0}0.104 \pm 0.017$ \\
WLM & Main settings & \tbdnum & $0.064 \pm 0.015$ & $\phantom{0}0.176 \pm 0.098$ \\
WLM & Original settings & \tbdnum & $0.059 \pm 0.015$ & $\phantom{0}0.115 \pm 0.019$ \\
Double-Stitch, $V(x)$ & Main settings & $0.071 \pm 0.019$ & $0.065 \pm 0.004$ & $\phantom{0}0.270 \pm 0.050$ \\
Double-Stitch, $V(x,t)$ & Main settings & $0.063 \pm 0.011$ & $0.073 \pm 0.009$ & $0.103 \pm 0.047$ \\
\midrule
\multicolumn{5}{l}{\emph{Forecast}} \\
Stitching, $V(x)$ & Main settings & $0.295 \pm 0.032$ & $1.388 \pm 0.119$ & $14.370 \pm 6.123$ \\
Stitching, $V(x,t)$ & Main settings & $0.212 \pm 0.018$ & $0.195 \pm 0.022$ & $\phantom{0}0.224 \pm 0.017$ \\
Stitching, $V(x)$ & Original settings & $0.206 \pm 0.001$ & $0.205 \pm 0.006$ & $\phantom{0}0.149 \pm 0.007$ \\
Stitching, $V(x,t)$ & Original settings & $0.175 \pm 0.008$ & $0.170 \pm 0.004$ & $\phantom{0}0.211 \pm 0.092$ \\
WLM & Main settings & \tbdnum & $0.088 \pm 0.023$ & $\phantom{0}0.194 \pm 0.048$ \\
WLM & Original settings & \tbdnum & $0.238 \pm 0.275$ & $\phantom{0}0.889 \pm 1.126$ \\
Double-Stitch, $V(x)$ & Main settings & $0.209 \pm 0.013$ & $0.134 \pm 0.003$ & $\phantom{0}1.839 \pm 0.539$ \\
Double-Stitch, $V(x,t)$ & Main settings & $0.194 \pm 0.007$ & $0.119 \pm 0.019$ & $0.222 \pm 0.008$ \\
\bottomrule
\end{tabular}

\end{table}

    \paragraph{Training time} We investigate how the training time of Double-Stitch depends on the number of particles in \cref{tab:scaling}. We find that the training time increases 4--5 times across all tasks when the number of particles increases tenfold.

    \begin{table}[H]
    \centering
    \setlength{\tabcolsep}{1.8pt}
    \caption{\textbf{Double-Stitch training time versus number of particles across datasets} (seconds, mean \textpm{} std, 3 seeds).}
    \label{tab:scaling}
    \appendixtablesize
    \begin{tabular}{@{}p{0.23\linewidth}*{5}{>{\raggedleft\arraybackslash}p{0.135\linewidth}}@{}}
\toprule
 & \multicolumn{3}{c}{EB} & \multicolumn{2}{c}{Oceans} \\
\cmidrule(lr){2-4}\cmidrule(lr){5-6}
Setting & \multicolumn{1}{c}{LPO} & \multicolumn{1}{c}{L1O} & \multicolumn{1}{c}{L2O} & \multicolumn{1}{c}{Interp.} & \multicolumn{1}{c}{Forecast} \\
\midrule
$P=100$ & $28.8 \pm 0.2$ & $84.7 \pm 3.3$ & $29.0 \pm 2.3$ & $104.7 \pm 1.0$ & $104.4 \pm 3.0$ \\
$P=400$ & $63.3 \pm 0.3$ & $186.8 \pm 2.1$ & $62.8 \pm 0.6$ & $252.7 \pm 6.0$ & $277.1 \pm 2.9$ \\
$P=1000$ & $127.3 \pm 2.9$ & $385.5 \pm 4.9$ & $125.6 \pm 0.4$ & $512.4 \pm 16.9$ & $532.5 \pm 7.5$ \\
\bottomrule
\end{tabular}

\end{table}

    \paragraph{Initialization}
    \label{app:initialization}
    In \cref{tab:initialization} we compare the performance of Double-Stitch with different trajectory and bandwidth initializations (see definitions in \cref{app:training-details}). Trajectory initialization has a large effect: endpoint OT has the largest error on all five tasks, and McCann-path is generally close to OT-path except for interpolation on oceans. Bandwidth initialization has a smaller effect: the coarsest initialization is better on ocean interpolation, and the two perform similarly on the other tasks.
    
    \begin{table}[H]
    \centering
    \setlength{\tabcolsep}{3pt}
    \caption{\textbf{Double-Stitch sensitivity to initialization across datasets} (mean \textpm{} std, 3 seeds, lower is better).}
    \label{tab:initialization}
    \appendixtablesize
    \begin{tabular}{@{}p{0.23\linewidth}*{5}{>{\raggedleft\arraybackslash}p{0.135\linewidth}}@{}}
\toprule
 & \multicolumn{3}{c}{EB} & \multicolumn{2}{c}{Oceans} \\
\cmidrule(lr){2-4}\cmidrule(lr){5-6}
Setting & \multicolumn{1}{c}{LPO $W_1$} & \multicolumn{1}{c}{L1O $W_1$} & \multicolumn{1}{c}{L2O $W_2$} & \multicolumn{1}{c}{Interp. $W_1$} & \multicolumn{1}{c}{Forecast $W_1$} \\
\midrule
OT-path, coarsest (ref.) & $0.618 \pm 0.001$ & $0.733 \pm 0.003$ & $0.795 \pm 0.006$ & $0.065 \pm 0.004$ & $0.134 \pm 0.003$ \\
OT-path, Silverman & $0.614 \pm 0.001$ & $0.729 \pm 0.004$ & $0.795 \pm 0.010$ & $0.077 \pm 0.021$ & $0.131 \pm 0.013$ \\
Endpoint OT, Silverman & $0.777 \pm 0.004$ & $0.814 \pm 0.011$ & $0.814 \pm 0.011$ & $0.504 \pm 0.274$ & $0.578 \pm 0.056$ \\
McCann-path, Silverman & $0.640 \pm 0.004$ & $0.736 \pm 0.001$ & $0.791 \pm 0.002$ & $0.121 \pm 0.009$ & $0.127 \pm 0.009$ \\
\bottomrule
\end{tabular}

\end{table}

    \paragraph{Time-dependent potential} In \cref{tab:time-potential} we compare the performance of Double-Stitch with static and time-dependent potentials. Overall, the time-dependent potential outperforms the static potential on EB LPO and ocean forecasting and matches it elsewhere.

    \begin{table}[H]
    \centering
    \setlength{\tabcolsep}{3pt}
    \caption{\textbf{Double-Stitch sensitivity to a time-dependent potential across datasets} (mean \textpm{} std, 3 seeds, lower is better).}
    \label{tab:time-potential}
    \appendixtablesize
    \begin{tabular}{@{}p{0.23\linewidth}*{5}{>{\raggedleft\arraybackslash}p{0.135\linewidth}}@{}}
\toprule
 & \multicolumn{3}{c}{EB} & \multicolumn{2}{c}{Oceans} \\
\cmidrule(lr){2-4}\cmidrule(lr){5-6}
Setting & \multicolumn{1}{c}{LPO $W_1$} & \multicolumn{1}{c}{L1O $W_1$} & \multicolumn{1}{c}{L2O $W_2$} & \multicolumn{1}{c}{Interp. $W_1$} & \multicolumn{1}{c}{Forecast $W_1$} \\
\midrule
$V(x)$ & $0.618 \pm 0.001$ & $0.733 \pm 0.003$ & $0.794 \pm 0.005$ & $0.065 \pm 0.004$ & $0.134 \pm 0.003$ \\
$V(x,t)$ & $0.558 \pm 0.004$ & $0.716 \pm 0.007$ & $0.800 \pm 0.004$ & $0.073 \pm 0.009$ & $0.119 \pm 0.019$ \\
\bottomrule
\end{tabular}

\end{table}

    \paragraph{Residual-loss weight} In \cref{tab:residual-coefficient} we compare the performance with various residual-loss weights. A smaller weight is slightly better on LPO, L1O and ocean interpolation, while the larger weight shows better results on L2O and forecasting. We hypothesize that this happens because on L2O and forecasting the gaps between observed snapshots are larger and the model relies more on the dynamics.

    \begin{table}[H]
    \centering
    \setlength{\tabcolsep}{3pt}
    \caption{\textbf{Double-Stitch sensitivity to the residual-loss weight across datasets} (mean \textpm{} std, 3 seeds, lower is better).}
    \label{tab:residual-coefficient}
    \appendixtablesize
    \begin{tabular}{@{}p{0.23\linewidth}*{5}{>{\raggedleft\arraybackslash}p{0.135\linewidth}}@{}}
\toprule
 & \multicolumn{3}{c}{EB} & \multicolumn{2}{c}{Oceans} \\
\cmidrule(lr){2-4}\cmidrule(lr){5-6}
Setting & \multicolumn{1}{c}{LPO $W_1$} & \multicolumn{1}{c}{L1O $W_1$} & \multicolumn{1}{c}{L2O $W_2$} & \multicolumn{1}{c}{Interp. $W_1$} & \multicolumn{1}{c}{Forecast $W_1$} \\
\midrule
$\lambda=0.3$ & $0.591 \pm 0.005$ & $0.724 \pm 0.005$ & $0.805 \pm 0.005$ & $0.061 \pm 0.002$ & $0.134 \pm 0.008$ \\
$\lambda=1$ (ref.) & $0.618 \pm 0.001$ & $0.733 \pm 0.003$ & $0.795 \pm 0.006$ & $0.065 \pm 0.004$ & $0.134 \pm 0.003$ \\
$\lambda=3$ & $0.633 \pm 0.003$ & $0.739 \pm 0.006$ & $0.786 \pm 0.004$ & $0.071 \pm 0.011$ & $0.129 \pm 0.004$ \\
\bottomrule
\end{tabular}

\end{table}

    \paragraph{Initial friction} In \cref{tab:initial-friction} we compare the performance of Double-Stitch with various initial friction parameters. $\gamma_0=0.1$ seems to produce slightly lower errors than $\gamma_0=1$ on all five tasks, while $\gamma_0=10$ increases the errors on EB and ocean interpolation. For the main experiments we use $\gamma_0=1$ (fixed before the sensitivity study, as are the other parameters).

    \begin{table}[H]
    \centering
    \setlength{\tabcolsep}{3pt}
    \caption{\textbf{Double-Stitch sensitivity to the initial friction across datasets} (mean \textpm{} std, 3 seeds, lower is better).}
    \label{tab:initial-friction}
    \appendixtablesize
    \begin{tabular}{@{}p{0.23\linewidth}*{5}{>{\raggedleft\arraybackslash}p{0.135\linewidth}}@{}}
\toprule
 & \multicolumn{3}{c}{EB} & \multicolumn{2}{c}{Oceans} \\
\cmidrule(lr){2-4}\cmidrule(lr){5-6}
Setting & \multicolumn{1}{c}{LPO $W_1$} & \multicolumn{1}{c}{L1O $W_1$} & \multicolumn{1}{c}{L2O $W_2$} & \multicolumn{1}{c}{Interp. $W_1$} & \multicolumn{1}{c}{Forecast $W_1$} \\
\midrule
$\gamma_0=0.1$ & $0.610 \pm 0.003$ & $0.728 \pm 0.004$ & $0.793 \pm 0.003$ & $0.062 \pm 0.001$ & $0.113 \pm 0.018$ \\
$\gamma_0=1$ (ref.) & $0.618 \pm 0.001$ & $0.733 \pm 0.003$ & $0.795 \pm 0.006$ & $0.065 \pm 0.004$ & $0.134 \pm 0.003$ \\
$\gamma_0=10$ & $0.740 \pm 0.003$ & $0.812 \pm 0.003$ & $0.827 \pm 0.003$ & $0.148 \pm 0.005$ & $0.127 \pm 0.007$ \\
\bottomrule
\end{tabular}

\end{table}

    \paragraph{Number of particles} In \cref{tab:particle-count} we compare the performance of Double-Stitch with various numbers of particles. On EB, increasing the number of particles reduces the error, with little change on L2O beyond $P=400$. On oceans the situation is different: the error on the interpolation task increases with $P$. This is likely because this dataset has only $111$ particles per snapshot, so $P=100$ already matches its size. On forecasting, $P=1000$ gives the lowest error but needs the highest training time (see \cref{tab:scaling}).

    \begin{table}[H]
    \centering
    \setlength{\tabcolsep}{3pt}
    \caption{\textbf{Double-Stitch sensitivity to the number of particles across datasets} (mean \textpm{} std, 3 seeds, lower is better).}
    \label{tab:particle-count}
    \appendixtablesize
    \begin{tabular}{@{}p{0.23\linewidth}*{5}{>{\raggedleft\arraybackslash}p{0.135\linewidth}}@{}}
\toprule
 & \multicolumn{3}{c}{EB} & \multicolumn{2}{c}{Oceans} \\
\cmidrule(lr){2-4}\cmidrule(lr){5-6}
Setting & \multicolumn{1}{c}{LPO $W_1$} & \multicolumn{1}{c}{L1O $W_1$} & \multicolumn{1}{c}{L2O $W_2$} & \multicolumn{1}{c}{Interp. $W_1$} & \multicolumn{1}{c}{Forecast $W_1$} \\
\midrule
$P=100$ & $0.667 \pm 0.013$ & $0.766 \pm 0.021$ & $0.833 \pm 0.003$ & $0.065 \pm 0.004$ & $0.122 \pm 0.018$ \\
$P=400$ & $0.629 \pm 0.009$ & $0.735 \pm 0.002$ & $0.794 \pm 0.003$ & $0.079 \pm 0.004$ & $0.134 \pm 0.003$ \\
$P=1000$ & $0.618 \pm 0.001$ & $0.733 \pm 0.003$ & $0.795 \pm 0.006$ & $0.099 \pm 0.013$ & $0.098 \pm 0.008$ \\
\bottomrule
\end{tabular}

\end{table}

\end{document}